\documentclass[lettersize,journal]{IEEEtran}

\hyphenation{op-tical net-works semi-conduc-tor IEEE-Xplore}

\usepackage[english]{babel}
\usepackage[utf8]{inputenc}
\usepackage{times} 
\usepackage{cite}

\usepackage{hyperref}
\usepackage[nolist]{acronym}

\usepackage{multirow}
\usepackage{makecell}

\usepackage{graphicx} 
\usepackage{epsfig} 
\usepackage{pgfplots}
\usepackage{caption}
\usepackage[font=footnotesize]{subcaption}
\usepackage{pgfplots}

\usepackage{float}

\usepackage[export]{adjustbox}



\newcommand{\maxs}{\mathit{max}}
\newcommand{\mins}{\mathit{min}}
\usepackage{mathtools}
\usepackage{nicefrac}
\usepackage{amsmath}
\usepackage{amsthm}
\usepackage{amssymb}
\usepackage{bm} 
\usepackage{balance}

\DeclareMathOperator*{\minimize}{minimize}

\DeclareMathOperator*{\argmax}{arg\,max}

\newtheorem{theorem}{Theorem}

\usepackage{algorithm}
\usepackage[noend]{algpseudocode}


\usepackage{pgfgantt}
\usepackage{dtsc-creafig}

\usepackage{tikz}
\pgfdeclarelayer{foreground}
\pgfsetlayers{background, main, foreground}
\usetikzlibrary{quotes, angles, backgrounds, arrows, automata, shapes, positioning, calc, through, spy, decorations.pathreplacing, decorations.markings, arrows.meta, automata, petri}

\tikzset{
    imglabel/.style={
      rectangle,
      inner sep=2pt,
      text=black,
      minimum height=1em,
      text centered,
      fill=white,
      fill opacity=1.0,
      text opacity=1,
      anchor=south west,
    },
  }
\tikzset{
	state/.style={
		rectangle,
		draw=black, very thick,
		minimum height=1.0em,
		text centered,
	},
}
\tikzset{
  on each segment/.style={
    decorate,
    decoration={
      show path construction,
      moveto code={},
      lineto code={
        \path [#1]
        (\tikzinputsegmentfirst) -- (\tikzinputsegmentlast);
      },
      curveto code={
        \path [#1] (\tikzinputsegmentfirst)
        .. controls
        (\tikzinputsegmentsupporta) and (\tikzinputsegmentsupportb)
        ..
        (\tikzinputsegmentlast);
      },
      closepath code={
        \path [#1]
        (\tikzinputsegmentfirst) -- (\tikzinputsegmentlast);
      },
    },
  },
  mid arrow/.style={postaction={decorate,decoration={
        markings,
        mark=at position .5 with {\arrow[#1]{stealth}}
      }}},
}

\usepackage{csvsimple}
\usepackage{booktabs}
\usepackage{adjustbox}

\graphicspath{{./figures/}}

\begin{document}

\title{Heterogeneous Multi-robot Task Allocation for Long-Endurance Missions in Dynamic Scenarios} 

\author{{\'{A}lvaro} Calvo and Jes{\'{u}}s Capit{\'{a}}n  
    %
    \thanks{This work was supported by project PID2024-161069OB-C33, funded by MICIU/AEI/10.13039/501100011033 and FEDER, UE.}
    \thanks{The authors are with the Multi-robot \& Control Systems group, Universidad de Sevilla, Spain, {\tt\small \{acalvom,jcapitan\}@us.es.} }
}

\maketitle


\begin{acronym}
  \acro{ARP}[ARP]{Arc Routing Problem}
  \acro{BT}[BT]{Behavior Tree}
  \acro{CARP}[CARP]{Capacitated Arc Routing Problem}
  \acro{LKH}[LKH]{Lin-Kernighan-Helsgaun}
  \acro{MILP}[MILP]{Mixed-Integer Linear Program}
  \acro{MRTA}[MRTA]{Multi-Robot Task Allocation}
  \acro{TSP}[TSP]{Traveling Salesman Problem}
  \acro{UAV}[UAV]{Unmanned Aerial Vehicle}
  \acro{VNS}[VNS]{Variable Neighborhood Search}
  \acro{VRP}[VRP]{Vehicle Routing Problem}
  \acro{wrt}[w.r.t.]{with respect to}
  \acro{ILS}{Iterated Local Search}
  \acro{SA}{Simulated Annealing}
\end{acronym}


\begin{abstract}
   
  We present a framework for \ac{MRTA} in heterogeneous teams performing long-endurance missions in dynamic scenarios. Given the limited battery of robots, especially for aerial vehicles, we allow for robot recharges and the possibility of fragmenting and/or relaying certain tasks. We also address tasks that must be performed by a coalition of robots in a coordinated manner. Given these features, we introduce a new class of heterogeneous \ac{MRTA} problems which we analyze theoretically and optimally formulate as a \ac{MILP}. We then contribute a heuristic algorithm to compute approximate solutions and integrate it into a mission planning and execution architecture capable of reacting to unexpected events by repairing or recomputing plans online. Our experimental results show the relevance of our newly formulated problem in a realistic use case for inspection with aerial robots. We assess the performance of our heuristic solver in comparison with other variants and with exact optimal solutions in small-scale scenarios. In addition, we evaluate the ability of our replanning framework to repair plans online.

\end{abstract}


\begin{IEEEkeywords}

Multi-robot task allocation; heuristic planning; long-endurance missions; multi-UAV applications

\end{IEEEkeywords}



\section{Introduction}
\label{sec:introduction}



\IEEEPARstart{T}{he} use of heterogeneous robot teams is rapidly expanding in applications that benefit from the combination of different robot capabilities, such as inspection~\cite{calvo_icuas22,agarwal_tro24}, agriculture~\cite{Ferreira2024,Leahy2022}, or fire fighting~\cite{seraj_tro22}. For example, \acp{UAV} could be combined with ground robots~\cite{Ferreira2024,krizmancic_ral20}, as the former can access more distant places while the latter can carry heavier equipment. Cooperation among heterogeneous \acp{UAV} is another option~\cite{smith_auro19,seraj_tro22,calvo_icuas22}, as they may provide different maneuverability (e.g., rotary vs. fixed-wing vehicles) or sensors and manipulation/delivery capabilities depending on the vehicle. In these applications, \ac{MRTA} can become especially hard, as they usually pose a multi-objective optimization with multiple constraints; e.g., some tasks may only be executed by certain robots with the required capabilities, or some could need to be accomplished by multiple robots in a synchronous manner, among others. 
We are interested in long-endurance missions in outdoor environments. This setting brings two additional challenges: 1) battery capacities could be limiting for the robots, especially for multi-\ac{UAV} teams, which inspire our work, so recharging operations should be scheduled during operation; and 2) these outdoor scenarios are typically dynamic and require online replanning to perform long-endurance missions robustly, potentially dealing with robot delays and failures. 

In this paper, our objective is to devise a planning framework to solve a new class of heterogeneous \ac{MRTA} problems. We introduce recharge operations to extend robots' autonomy, and we endow the problem with greater flexibility by allowing certain tasks to be fragmented and/or executed by coalitions -- with a non-fixed size -- of robots, as well as allowing the possibility of having inter-robot relays. Moreover, as we aim for dynamic scenarios where unplanned events may occur, our framework is able to detect these events and repair or fully recompute plans in real time to adapt to the new circumstances. To the best of our knowledge, there is no alternative method in the literature that combines all these problem features (see related works in Section~\ref{sec:relatedWork}). Therefore, we take a step forward by categorizing, formulating, and solving this novel class of heterogeneous \ac{MRTA} problems for long-endurance missions.  
The contributions of this work are the following:

\begin{enumerate}
  \item Motivated by heterogeneous teams of robots performing long-endurance missions in dynamic outdoor settings, we pose a novel \ac{MRTA} problem (Section~\ref{sec:problem_description}) where robot recharges are allowed and different types of tasks are combined, depending on their level of decomposability and on the size flexibility of the robot coalition required to execute them. We discuss the categorization of our problem within well-known \ac{MRTA} taxonomies in the literature and prove its NP-hard complexity.
  \item We formulate our \ac{MRTA} problem as a \ac{MILP}. Our formulation is general enough to account for all problem features (Section~\ref{sec:milp}), integrating  heterogeneous robot capabilities, recharge operations, task decomposition, inter-robot relays, and multi-robot tasks performed by synchronized coalitions of fixed or variable size. From a theoretical perspective, this \ac{MILP} formulation helps analyze the complexity and characteristics of the optimal solutions of our new class of \ac{MRTA} problems. 
  \item Given the NP-hardness of the problem, exact solvers suffer from scalability issues to tackle the  \ac{MILP} posed here. Therefore, we contribute a novel heuristic algorithm that leverages specific properties of the problem in order to find approximate solutions that comply efficiently with all constraints (Section~\ref{sec:planner}). 
  \item To robustly cope with dynamic scenarios, we integrate our heuristic solver into a mission planning and execution framework (Section~\ref{sec:online_replaning}). Mission execution is monitored to 1) repair plans in case of robot delays; and 2) recompute the full plan online if a repair is not possible or an unexpected event occurs, such as a robot failure or the arrival of new tasks. 
  \item We provide extensive experimental results to demonstrate our algorithms in a realistic use case for multi-\ac{UAV} inspection (Section~\ref{sec:experimental_results}). We compare our heuristic plans with the optimal solutions of the \ac{MILP} formulation in small-scale scenarios. We then assess the performance of the heuristic solver for larger scenarios and evaluate our whole replanning and execution framework. All our code is provided open-source for the community and integrated into a ROS-based architecture.\footnote{Robot Operating System, \url{https://www.ros.org}.}     

\end{enumerate}

This paper is an \emph{evolved version} of our previous work~\cite{calvo_icra24}, where we introduced our \ac{MILP} formulation. We extend it with the heuristic algorithm to find approximate solutions efficiently and with the online replanning framework for dynamic scenarios. Additionally, we provide a full set of new experimental results and release all the code as open source.



\section{Related Work}
\label{sec:relatedWork}

A number of comprehensive~\ac{MRTA} taxonomies can be found in the literature~\cite{gerkey2004formal,nunes2017taxonomy,korsah2013comprehensive}. The existing methods can be classed as ``centralized,'' where a single entity has access to all robots' information to perform task allocation, or ``distributed,'' in which  robots compute their plans locally and exchange information with others to converge to a common solution. They can also be grouped into exact methods, which provide optimal (or near-optimal) solutions, and methods that can provide approximate solutions more efficiently (typically using heuristics) and are more suitable for online task allocation.

\subsection{Distributed approaches}

Market-based approaches have been widely used for \ac{MRTA}, such as auctions~\cite{choi_tro09,otte_auro20}, which involve auctioning tasks to robots through a bidding process based on a utility function that combines the robot’s capabilities and the problem constraints. Even though they can be centralized, these methods are typically implemented in a distributed fashion, where each robot determines its own bid for tasks through an internal cost function. 
Besides auctions, other distributed greedy methods, such as distributed versions of the Hungarian algorithm~\cite{chopra_tro17} and task-swapping algorithms~\cite{liu_auro15}, iterate through pairs of robots, exchanging tasks to improve team performance.

Auction-based methods rarely consider schedules and task ordering constraints. The work by Krizmancic et al.~\cite{krizmancic_ral20} is an exception, as they address temporal and precedence constraints for task allocation in aerial-ground robot teams for automated construction. Others have also proposed decentralized auctions for multi-robot task scheduling with time windows, considering either task precedences~\cite{mcintire_aamas16} or robot capacities~\cite{bai_tase23}.

Ferreira et al.~\cite{Ferreira2024} present a distributed algorithm for task scheduling, considering precedence constraints but not multi-robot tasks. They propose an~\ac{MILP} formulation (and a genetic solver) in which robots have heterogeneous capabilities and battery constraints, but do not model recharges. Another approach is to use probabilistic methods, as Smith et al. do~\cite{smith_auro19}, with a distributed algorithm based on a Monte Carlo tree search. They consider battery-limited robots but do not address temporal constraints or multi-robot tasks. Generally, these distributed methods struggle to find optimal solutions in complex problems with tightly coupled restrictions, such as those targeted in this work.

\subsection{Multi-agent planning and scheduling}

The multi-agent planning community has devoted significant effort to solving task planning and scheduling problems~\cite{torreno_cs17}. In these scenarios, each robot's plan consists of an ordered set of actions (or tasks) that can have varying durations and can be executed sequentially or in parallel, with strong time-related dependencies. Some works address uncertainty in task durations or outcomes~\cite{dhanaraj_icra24,choudhury_auro22}, but few focus on multi-robot tasks where a coalition performs the same task concurrently. 
For instance, by assuming minimum and maximum coalition sizes, multi-robot tasks can be decomposed into single-robot tasks solvable by classical single-agent planning, with solutions subsequently combined~\cite{shekhar_ai20}. A separate collaborative action would be included for each possible combination of robots that can execute each multi-robot task. The limit on the number of coalition members has also been used to partition the problem into loosely and tightly coupled components and solve them separately, incrementally increasing the number of robots that share a task~\cite{chouhan_ai17}. Unlike these approaches, our method does not require fixing intervals for the number of robots in a task, which allows for greater flexibility. Additionally, in contrast to these prior approaches, we prioritize planning for recharges over task ordering constraints.

\subsection{\ac{MILP} formulations}

Operations research offers a variety of related problem formulations. \acp{ARP} involve covering a set of graph edges (arcs) with one or more vehicles~\cite{corberan_networks21}. The \ac{CARP} adds limited vehicle capacities per tour, suitable for modeling battery-limited robots. In robotics, \acp{VRP} and their variants~\cite{vidal_ejor20} are more common. \acp{VRP} involve computing tours for one or more vehicles to visit spatial locations (graph nodes), starting and returning to one or more depots. \acp{VRP} generalize the \ac{TSP}, where the vehicle must return to the initial depot. These NP-hard problems are often addressed with heuristic methods due to the limitations of \ac{MILP} formulations.

Agarwal et al.~\cite{agarwal_tro24} proposed a \ac{CARP} for power line inspection with multiple \acp{UAV}, modeling directional and state-dependent (inspecting vs. traveling) edge costs, such as wind effects. Vehicle battery life is limited per tour, but recharges are not considered. While many \ac{ARP}/\ac{VRP} formulations incorporate battery constraints or heterogeneous capabilities, recharging operations and inter-robot synchronization for multi-robot tasks are rare. For instance, Dorling et al.~\cite{Dorling2017} presented a \ac{UAV} delivery \ac{VRP} with vehicle reuse (recharging), including deadlines but not multi-robot tasks. Li et al.~\cite{li_cie18} proposed a multi-period \ac{MILP} formulation (without time constraints) to model recharges in a multi-\ac{UAV} traffic monitoring \ac{CARP} variant.
  
Planning recharge times and locations for \emph{persistent} \ac{UAV} teams has garnered significant interest due to its relevance to aerial monitoring and surveillance. Several works~\cite{Mathew2015,ding_iros19,yu_icra18,diller_aamas23,Maini2019} have presented \ac{TSP} variations that provide schedules specifying when and where \acp{UAV} recharge between tours. Mathew et al.~\cite{Mathew2015} consider multiple moving ground recharging stations, discretizing \ac{UAV} trajectories into projected ground points for a graph-based abstraction. Ding et al.~\cite{ding_iros19} also consider mobile depots, formulating a generalized multiple depot \ac{TSP} for persistent \ac{UAV} surveillance. Yu et al.~\cite{yu_icra18} plan recharge-inclusive tours for a single \ac{UAV}, optimizing visit order, recharge times, and locations. Diller et al.~\cite{diller_aamas23} propose a mixed-integer nonlinear program for joint path planning of a \ac{UAV} and a moving ground vehicle, allowing the \ac{UAV} to adjust its speed between rendezvous for recharging. Maini et al.~\cite{Maini2019} minimize coverage time by optimizing routes and rendezvous locations for a \ac{UAV}-ground vehicle team. Arribas et al.~\cite{arribas_tro23} address persistent aerial service, where each location requires continuous \ac{UAV} coverage, minimizing fleet size while ensuring persistent service with recharges. While these works focus on persistent tasks, our work addresses a broader problem encompassing fragmentable and relayable multi-robot heterogeneous tasks.

Another important aspect of the problem addressed in this work is the existence of temporal constraints and multi-robot tasks; i.e., tasks that have to be executed by several robots working in a synchronized fashion. A first step to tackle this is forming coalitions made up of robots with heterogeneous capabilities, which collectively fulfil the task requirements. For instance, Ramchurn et al.~\cite{ramchurn_aamas10} propose an \ac{MILP} formulation where coalitions consist of robots with the same capabilities but different degrees of effectiveness performing multi-robot tasks with time deadlines. Gosrich et al.~\cite{Walker2023} developed a mixed integer non-linear programming approach to formulate a \ac{MRTA} problem with task precedence constraints and robot coalitions for multi-robot tasks. A second step is to deal with the temporal constraints. For this, we focus on task deadlines and robot synchronization for multi-robot tasks and relayable tasks. Bredstrom et al.~\cite{bredstrom_ejor08} proposed a \ac{VRP} with time windows where some visits must be pairwise synchronized due to application requirements. Instead of minimizing waiting times as we do, they formulate hard constraints that impose an equal arrival time on certain robots performing different tasks. Flushing et al.~\cite{flushing_iros14} presented an \ac{MILP} for heterogeneous \ac{MRTA}, where \emph{non-atomic} tasks are considered; i.e., tasks that can be executed incrementally over disjoint periods of time. This is related to our concept of fragmentable tasks, which are tasks split into several segments, though we also add the possibility of having relayable tasks, which implies additional synchronization restrictions between the robot leaving the task and the one picking it up. These works do not typically consider reusable robots with limited operation time that can be reused by means of periodic recharges. Overall, among all existing \ac{MILP} formulations in the state of the art, we believe that there is a gap consisting of methods combining all the features addressed in this work. Either multi-robot, fragmentable, and relayable tasks are not considered, or vehicle recharges are not modeled. 

\subsection{Heuristics}

Due to the high complexity of the multi-robot optimization problems discussed in this work, to cope with either heterogeneous robot capabilities or multi-robot and fragmentable tasks with temporal constraints, finding optimal solutions is usually computationally demanding, even prohibitive in certain cases. Therefore, heuristic algorithms are commonly proposed to compute approximate solutions in a more efficient and scalable manner. 

Metaheuristic algorithms are a first approach; tabu search, genetic algorithms, and simulated annealing being the most common options for \ac{MRTA} problems. For instance, genetic algorithms have been applied successfully to solve problems with heterogeneous robots and tasks with temporal and precedence constraints~\cite{Ferreira2024,miloradovic_tcyber22,bai_is18,yan_esa24}. Simulated annealing has been used to solve delivery problems where \acp{UAV} can make multiple trips by recharging~\cite{Dorling2017}, and recently, to solve scheduling problems for multi-robot manufacturing~\cite{Liu2023}. The metaheuristic \ac{VNS} has also been used for orienteering problems in multi-\ac{UAV} data collection applications~\cite{PenikaVNS2019,PenikaVPN2019}.

On the one hand, metaheuristics have the advantage that they are generic algorithms that can be adapted to encode many diverse problems. On the other hand, as they rely on random search, they can require a large amount of  computation time to identify promising solutions. Another option is to use heuristics specifically designed for the application at hand, trying to leverage any a priori known information about the problem structure.
In this category, it is worth noting some heuristics for well-known problems in the literature that are related to ours. For instance, the \ac{LKH} heuristic~\cite{Helsgaun2017}, which follows a strategy based on local search, has been widely used for different variants of the \ac{TSP}, including problems that consider \ac{UAV} recharges~\cite{Maini2019,Mathew2015}. Agarwal et al.~\cite{agarwal_tro24} proposed a novel heuristic for a multi-\ac{UAV} \ac{CARP} problem. The underlying idea is to create an initial set of tours (one per arc to visit) which are then merged together in a greedy fashion to improve the solution. Al-Hussaini et al.~\cite{Gupta2023} discussed existing heuristics for scheduling problems, and proposed a new one for a problem with multi-robot tasks and temporal constraints. Gombolay et al.~\cite{Gombolay2018} presented another heuristic algorithm supporting temporal windows and precedence constraints for the tasks, but also heterogeneous robot capabilities. They use a multi-agent task sequencer, inspired by real-time processor scheduling techniques, in conjunction with an~\ac{MILP} solver that resolves task-robot allocation. Although they outperform other relevant state-of-the-art heuristic solvers, yielding nearly-optimal solutions, battery-limited robots and multi-robot tasks are not modelled. Messing et al.~\cite{Messing2022} contributed a unified framework for task planning and scheduling in heterogeneous multi-robot teams. Their solution is a multi-layer approach that interleaves task planning, allocation, scheduling, and motion planning until a valid plan is found, applying heuristic search-based algorithms at the different layers. The framework is quite generic and includes task temporal constraints and multi-robot tasks performed by robot coalitions (they add wait constraints to synchronize those coalitions). However, they do not cover battery constraints and recharging operations. Ramchurn et al.~\cite{ramchurn_aamas10} did not cover battery constraints either, but they devised an interesting heuristic procedure to solve a problem where coalitions of heterogeneous robots have to be allocated to multi-robot tasks with deadlines. 

Finally, reinforcement learning is a promising alternative to heuristic methods for addressing the computational burden of \ac{MRTA}~\cite{ma_tiv24,limbu_icra24}. However, key challenges with these methods include their lower interpretability and lack of optimality guarantees.

\subsection{Replanning approaches}

Apart from the traditional methods for \emph{offline} task allocation, there are specific frameworks in the literature for dynamic \ac{MRTA}, which monitor task execution during operation and trigger some replanning procedure if the running plan is not valid anymore after an unexpected event has occurred (e.g., a change in some task requirement or robot capabilities). The idea is to reallocate incomplete tasks not from scratch but taking into account the previous plan and the effects of the disruptive event on task performance.    

Al-Hussaini et al.~\cite{Gupta2023} presented a heuristic algorithm that provides reallocation suggestions to handle contingencies during mission execution, such as a robot failing, a new task arriving, a task changing its parameters, a task being identified as risky, or other events of this nature. They consider multi-robot tasks with deadlines and precedence constraints as well as the possibility of creating additional rescue and relay tasks, the former to rescue a disabled robot, the latter to send a robot close enough to a subgroup of teammates out of communication range in order to share a new plan with them. Neville et al.~\cite{Neville2023} also proposed an approach for dynamic task allocation in heterogeneous multi-robot settings with task temporal constraints. Multi-robot tasks are performed by robot coalitions that are created using a trait-based method. Overall, their method interleaves task allocation, scheduling, and motion planning in order to compute solutions incrementally by performing graph-based heuristic searches. After an unexpected event, a new solution is efficiently computed by repairing only the necessary nodes in the task graph. Leahy et al.~\cite{Leahy2022} developed a framework for task planning in teams of robots with heterogeneous capabilities. They define a specification language based on temporal logic, which a user can employ for high-level mission specification, allowing temporal constraints and multi-robot tasks. These specifications are then encoded in an \ac{MILP} formulation whose objective is optimizing robustness against robot failures. Thus, the plan computed is the one that tolerates the largest number of robot dropouts. Online replanning is enabled by means of another \ac{MILP} that modifies the original after a robot dropout. Recently, \cite{Lippi2023} presented an \ac{MILP} formulation for task allocation in human multi-robot teaming scenarios. After the mission has started, execution of the plan is continuously monitored, and a replanning procedure is triggered if the current plan is no longer feasible or if its expected quality (according to the application cost function) has decreased below a certain threshold. Unlike ours, these methods do not consider battery-limited robots with the possibility of recharging, although the concept of creating \emph{virtual} robot rescue tasks~\cite{Gupta2023} may be similar to our recharge tasks.



\section{Problem Description}
\label{sec:problem_description}

We address a task planning problem with a team of heterogeneous cooperating robots. The robots are heterogeneous in the sense that they can provide different sensing/locomotion capabilities, and not all of them are suitable for every task. For instance, a robot with the ability to manipulate and transport items will be suitable for a delivery task, while a robot with a specific camera onboard could be required for a particular inspection task. Each robot has a limited operational time given by its battery capacity, but unlimited recharges are allowed (with an associated time cost) at fixed stations with known positions to reset the robot's battery level. Each task has a known spatial location and a predefined time duration; we do not have specific time windows in which tasks need to start their execution, nor precedence constraints, but only a maximum completion time for each task (\emph{deadline}), as a way to establish different priorities between tasks. 

The goal is to compute the optimal plan (minimizing the \emph{makespan}; i.e., the completion time of the whole mission) for the team, given all constraints. This means devising a schedule containing the set of ordered tasks that each robot has to execute and their start time instants. Moreover, we consider dynamic scenarios, in the sense that new tasks may arrive at any time and robots may fail when executing an assigned task or while traveling. This implies the need for online replanning in order to react to new tasks or circumstances.

One of the major novelties of this work is given by the complexity of the allocation problem that we tackle, as we consider different types of tasks that yield a new task categorization. Before describing these task categories, let us propose a running example that will be used to motivate our work and to better illustrate the different types of tasks. 

\paragraph*{\emph{\textbf{Running example}}} \textit{Our main focus in this work is the use of heterogeneous teams of \acp{UAV} to execute long-endurance missions in outdoor settings. In these scenarios, the limited flight time of \acp{UAV} is key, and this is why we explicitly allow recharging operations when planning, to permit extended periods of autonomy. 
Thus, imagine an application where a team of \acp{UAV} provides support to human workers during inspection operations in a solar energy plant. Depending on their capabilities, the \acp{UAV} could be sent to inspect remote areas of the plant, to monitor worker operations for safety issues, to deliver tools or other items to some workers, and so on. One or several base stations would be installed at known positions around the plant so that the \acp{UAV} can recharge when needed. Although there may be a starting set of scheduled tasks, as the inspection mission evolves, human operators could decide to order new tasks to be assigned to the supporting \acp{UAV}, which makes the mission dynamic. Moreover, due to unexpected situations or hardware issues, \acp{UAV} could run out of battery and become unavailable.} 

\begin{figure}[tb]
  \centering
  \includegraphics[width=0.7\columnwidth]{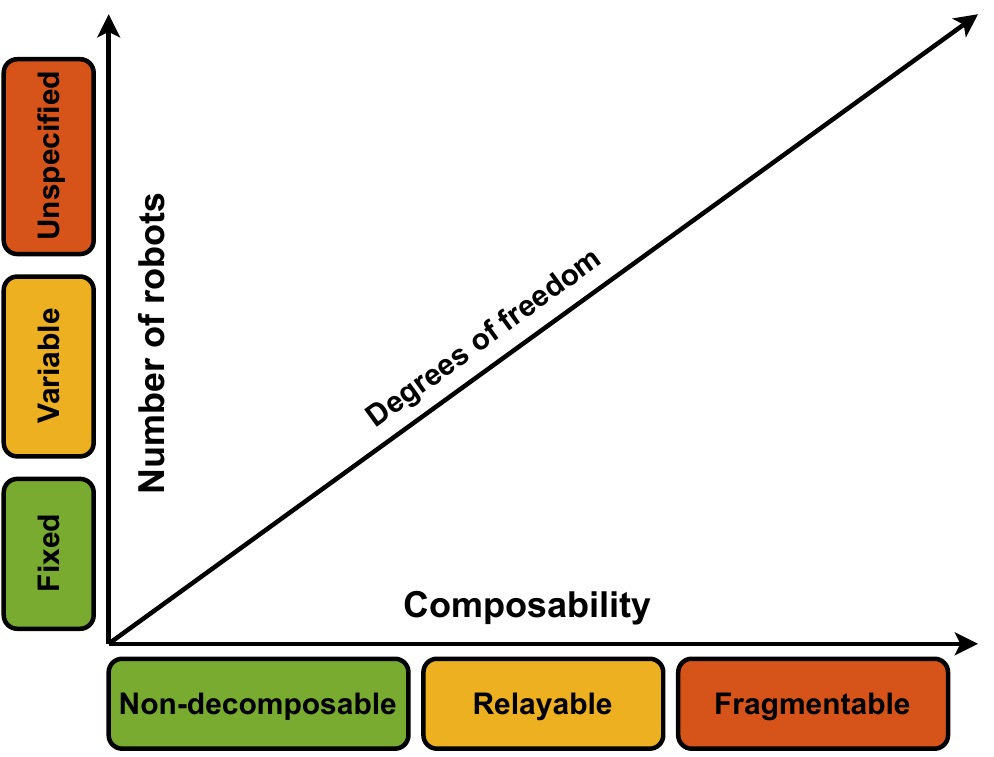}
  \caption{Complexity of the problem according to task categorization. Tasks are classified according to their decomposability and coalition size flexibility; a higher degree of freedom implies a planning problem that is harder to solve.}
  \label{fig:features_overview}
\end{figure}

Now let us describe the task categorization that we consider in our problem. As shown in Figure~\ref{fig:features_overview}, tasks are classified according to two properties that determine the overall complexity of the problem: task decomposability and task coalition size flexibility. 

\begin{itemize}
  \item \textbf{Decomposability}: This property indicates the capacity of a task to be decomposed into subtasks. We consider three types: \emph{non-decomposable}, \emph{fragmentable} and \emph{relayable}. Non-decomposable tasks have to be executed entirely by the same set of robots from start to end without pause. Fragmentable tasks can be divided into a set of fragments (each fragment duration is obtained by also dividing the original task duration) that can be executed independently by different robots. There are no dependencies between the timelines of the various fragments: they could be executed in any order, with gaps in between, and even overlapping in time. In contrast, relayable tasks can be divided into a sequence of fragments that must be executed on a continuous timeline without gaps in between. This division is made when the assigned robot (or multi-robot coalition) does not have enough battery to perform the entire task and there is a need for relays. Therefore, in relayable tasks, consecutive fragments are executed by different robots, as robots leaving to recharge are replaced by new ones. In our running example, a non-decomposable task could be a \ac{UAV} that has to deliver a tool to an operator; this task has to be carried out entirely by the same robot without interruptions. A fragmentable task could be the aerial inspection of a particular area of the solar plant to search for malfunctioning elements. In this case, the inspection area could be divided into several parts, and these could be inspected independently. For example, the same \ac{UAV} could inspect all subareas (task fragments) sequentially, even including recharges in between; or the subareas could be inspected by different \acp{UAV} in parallel. A relayable task could be a \ac{UAV} that has to use its onboard camera to monitor a worker operating in the field for safety issues. When the operation is risky, it is critical to have a continuous video stream of the worker. Therefore, if the task duration is too long, successive \acp{UAV} will have to relay each other as they run out of battery, in such a way that one is always monitoring the operation.
  
  \item \textbf{Coalition size flexibility}: We consider multi-robot tasks that need to be jointly executed by a coalition of robots. This property refers to the flexibility of the task with respect to the required size of the coalition. We consider three types of specification for the coalition size. First, for tasks where the coalition size is \emph{fixed} (it could be 1 for single-robot tasks) as a hard constraint, these tasks must be executed by coalitions with exactly the specified size. Second are tasks where the coalition size is \emph{variable} as a soft constraint. This means that an ideal coalition size for the task is specified, but coalitions of different size are also allowed although with a penalty. The third kind is tasks where the coalition size is \emph{unspecified}. These are tasks without constraints on the coalition size. Any coalition size is allowed without penalty and the task duration will depend on the final number of robots allocated. We assume that unspecified tasks are also fragmentable, which is usually the case, but not the opposite. Coming back to our running example, a task to monitor a worker's operation in the field could be requested to be executed by three \acp{UAV} simultaneously, but it may still be acceptable to execute it with fewer \acp{UAV} if there are not enough resources. This would be an example of a variable coalition size. In contrast, some inspection techniques for solar panels require the explicit cooperation of a fixed number of several \acp{UAV}. For example, photoluminescence inspection involves illuminating a specific portion of the solar panel surface and recording the luminescence emission generated in the remaining area. This could be done by a coalition of exactly two \acp{UAV}, one illuminating and the other acquiring images with a specialized camera onboard. An example of a task with an unspecified coalition size is a survey of a given area for standard thermal inspection. In this case, the more \acp{UAV} assigned, the shorter the task duration. 

\end{itemize}

We cover what are known as ST-MR-TA problems according to a well-known taxonomy of \ac{MRTA}~\cite{gerkey2004formal}. This means that our robots are \emph{Single-Task} (ST); they can only perform one task at a time. The tasks are \emph{Multi-Robot} (MR); they could require multiple robots to be executed. Lastly, we have a \emph{Time-extended Assignment}; i.e., each robot is allocated several tasks that must be executed according to a given schedule. Nunes et al.~\cite{nunes2017taxonomy} extended this taxonomy to differentiate between problems with \emph{Task Windows} (TW) and \emph{Synchronization and Precedence} (SP) constraints. Our problem falls within the TW category, since we have no constraints on the start times for the tasks but we do have deadlines. Although we do not include precedence constraints between tasks, note that time synchronization is still imposed when multiple robots need to perform a task together or execute relays. Lastly, another well-known taxonomy~\cite{korsah2013comprehensive} distinguishes between different types of problems depending on the inter-task relationship. Our decomposable tasks fall into the \emph{Complex Dependencies} (CD) category defined in this taxonomy. Given that we have multi-robot tasks, there are inter-schedule dependencies, as plans for each robot cannot be computed independently. However, CD problems imply an additional complexity, as the optimal decomposition of tasks must be computed jointly with the task allocation. This is our case, since some tasks may be split for recharges and there is an additional degree of freedom to decide when to recharge.  

Finally, note that we do not allow partial execution of the tasks. All tasks must be executed completely in order for a mission plan to be valid. Since it would need to be decided to what extent each task is covered, partial execution would significantly increase the problem complexity without a clear advantage in practical scenarios (we can still propose complete solutions and replan after partial execution caused by a robot failure).  
Furthermore, \emph{persistent} tasks are not explicitly considered either. These are tasks without a finite duration to be executed indefinitely until mission termination. For instance, monitoring a given perimeter with \acp{UAV} for security. Nonetheless, note that if we include a persistent task along with others that have deadlines, the persistent task would have the lowest priority. This means it could easily be added to the robots' plan when they become idle after completing their finite tasks.
    
\subsection{Problem complexity}
\label{subsec:proof_of_NP_hardness}

\begin{theorem}
  The heterogeneous \ac{MRTA} problem proposed in this section is NP-hard.
\end{theorem}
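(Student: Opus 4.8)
The plan is to prove NP-hardness by a polynomial-time reduction from a classical NP-hard problem, exploiting the fact (noted in Section~\ref{sec:relatedWork}) that our formulation generalizes the \ac{TSP}. The key observation is that NP-hardness only requires exhibiting a \emph{single} restricted family of instances that is already intractable, so all the additional expressive features of the problem (recharges, fragmentation, relays, and coalitions of variable or unspecified size) can simply be switched off, leaving a residual problem that coincides with the \ac{TSP}.

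Concretely, I would start from an arbitrary \ac{TSP} instance: a complete weighted graph on $n$ nodes with symmetric edge costs $c_{ij}$. From it I build an \ac{MRTA} instance with a single robot and one task per node, where every task is \emph{non-decomposable}, has a \emph{fixed} coalition size equal to $1$, zero execution duration, and a deadline large enough to be non-binding; the robot's battery capacity is set larger than the sum of all edge costs so that no recharge can ever be triggered; and the travel time between the locations of tasks $i$ and $j$ is set to $c_{ij}$. This construction is clearly polynomial in the size of the input.

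Under this restriction a feasible plan is just an ordering in which the single robot visits all task locations, and the makespan equals the total travel time along that ordering. Minimizing the makespan is therefore exactly the problem of finding a minimum-cost Hamiltonian path (or a closed tour, if the robot is additionally required to return to its start depot), which is NP-hard. Hence the optimum of the constructed \ac{MRTA} instance coincides with the \ac{TSP} optimum, and any polynomial-time algorithm for our problem would solve the \ac{TSP} in polynomial time, establishing the claim.

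The main point to argue carefully is that the constructed instance is a \emph{legitimate} member of our problem class with all inactive features correctly neutralized, rather than a degenerate object: I must verify that non-decomposable, fixed-size-one tasks with slack deadlines impose no hidden constraints, and that the chosen battery bound genuinely precludes recharges for \emph{every} feasible ordering, so that the makespan reduces to pure travel cost. A secondary subtlety is the tour-versus-path mismatch between the closed-tour \ac{TSP} objective and the makespan of a non-returning robot; this is resolved either by reducing from the (equally NP-hard) shortest Hamiltonian path problem, or by adding an explicit return-to-depot requirement so that the makespan matches the closed-tour cost exactly.
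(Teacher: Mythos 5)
Your proposal is correct and follows essentially the same strategy as the paper's proof: switch off the novel features (decomposability, coalition sizes, recharges, binding deadlines) so that the residual problem is a classical NP-hard routing problem. The difference is the target: the paper keeps the whole team with their distinct initial positions and observes that the restricted instance is exactly the multi-depot \ac{VRP}, citing its known NP-hardness, whereas you collapse to a single robot and reduce from the \ac{TSP} / minimum Hamiltonian path. Your version is more explicit as a reduction (you construct the instance, argue the objective collapses to pure travel cost, and address the tour-versus-path mismatch, which the paper does not need to discuss since multi-depot \ac{VRP} already matches its open-route, multi-robot setting), while the paper's version is shorter because it delegates all such details to the cited \ac{VRP} result. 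One subtlety you should patch: the paper defines displacement times from Euclidean distances between task locations $p_t$ and robot speed $v_r$, so an arbitrary symmetric cost matrix $c_{ij}$ is not necessarily realizable by a legitimate instance of the problem class; this is repaired at no cost by reducing from the Euclidean \ac{TSP} (or Euclidean Hamiltonian path), which is also NP-hard, placing the tasks at the corresponding points in the plane. Also, rather than zero execution durations (the paper assumes tasks have predefined durations), you can give all tasks equal positive durations, since these add the same constant to the makespan of every ordering and do not affect the argmin.
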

\begin{proof}
    Given its additional flexibility in terms of task decomposability and coalition size, the problem posed here is a generalization of other well-known \ac{MRTA} problems that are NP-hard. This can be proved by contradiction: let us assume the problem is not NP-hard. But if we can find a particular instance of our problem that is NP-hard, the overall problem must also be NP-hard, as there would not be a known algorithm to solve all its instances in polynomial time. Let us consider a specific instance of the problem where all robots have unlimited battery and capabilities to execute all tasks. Let all tasks be non-decomposable, with an arbitrarily large deadline and with a fixed coalition size of 1; i.e., all tasks are single-robot. In that case there is no need for recharges, fragmentation or multi-robot synchronization, and the problem consists of assigning the tasks to the different robots, which can start or end at different base stations. This would be equivalent to the well-known multi-depot vehicle routing problem, which is NP-hard~\cite{braekers_cie16}.
\end{proof}  



\section{MILP Formulation}
\label{sec:milp}

In the following, we develop our mathematical formulation to solve the problem in Section~\ref{sec:problem_description}, cast as an~\ac{MILP}. Table~\ref{tab:list_of_symbols} contains a list of symbols used and their descriptions.

\begin{table}
  \centering
  \caption{Table of symbols separated by category. Within the \ac{MILP} variables, decision variables are shown in bold.}
  \begin{tabular}{p{.16\columnwidth}|p{.75\columnwidth}}
      \hline
      \multicolumn{2}{c}{Robot parameters} \\
      \hline
          $\mathcal{R}$ & Set of $n$ heterogeneous robots \\
          $p_{r,0}$ & Initial position of robot $r$ \\
          $v_r$ & Traveling speed of robot $r$ \\
          $B_r^{max}$ & Maximum battery time of robot $r$ \\
          $B_{r,0}$ & Initial battery time consumed of robot $r$ \\
          $B_{r}^{min}$ & Minimum remaining safety battery time for robot $r$ \\
          
      \hline
      \multicolumn{2}{c}{Task parameters} \\
      \hline
          $\mathcal{T}$ & Set of $m$ tasks to be allocated \\
          $\tilde{\mathcal{T}}$ & Set of tasks including recharge task \\
          $\mathcal{T}_0$ & Set of $n$ auxiliary tasks to encode initial robot positions \\
          $p_t$ & Spatial location of task $t$ \\
          $T_t^{e}$ & Estimated execution time for task $t$ \\
          $T_t^{max}$ & Deadline to complete task $t$ \\
          $T^d_{r,t_1,t_2}$ & Displacement time from task $t_1$ to $t_2$ for robot $r$ \\
          $N_t$ & Coalition size specified for task $t$ \\
          $t_R$ & Recharge task \\
          $H_{r,t}$ & Capability (binary) of robot $r$ to execute task $t$\\
              
      \hline
      \multicolumn{2}{c}{Problem parameters} \\
      \hline
          $\mathcal{S}$ & Set of time slots for each robot's queue \\
          $N_f$ & Maximum number of fragments for a task \\
          $\eta_1,\eta_2,\eta_3,\eta_4 $ & Normalization constants for the objective function\\
          
      \hline
      \multicolumn{2}{c}{\ac{MILP} variables} \\
      \hline

      $\bm{x_{r,t,s}}$ & It encodes if task $t$ is allocated to slot $s$ of robot $r$ \\
      $\bm{n^f_t}$ & Number of fragments in which task $t$ is divided \\    
      $n_t$ & Number of times task $t$ appears among all robot queues \\
      $n^q_t $ & Number of robot queues where task $t$ appears\\
      $n^r_t$ & Number of robots executing task $t$ simultaneously \\
      $n^q_{r,t} $ & Variable encoding if task $t$ appears in robot $r$'s queue \\
      $\bm{T^w_{r,s}}$ & Waiting time for robot $r$ in slot $s$ \\
      $T^d_{r,s}$ & Displacement time for robot $r$ to reach task in slot $s$ \\
      $T^e_{r,s}$ & Execution time for task allocated to slot $s$ of robot $r$ \\
      $T^f_{r,s}$ & Finish time for task allocated to slot $s$ of robot $r$ \\
      $B_{r,s}$ & Battery time consumed by robot $r$ at the end of slot $s$ \\
      $y_{t,r_1,s_1,r_2,s_2}$ & It encodes if robots $r_1$ and $r_2$ need to synchronize task $t$ in slots $s_1$ and $s_2$, respectively \\
      $z_{t,r_1,s_1,r_2,s_2}$ & It encodes if robots $r_1$ and $r_2$ need to relay task $t$ in $s_1$ and $s_2$, respectively \\
      $Z$ & Makespan of the mission \\
      $V_t$ & Deviation in the specified coalition size for task $t$ \\
      $\Delta T^{max}_{r,s}$ & Delay for robot $r$ to complete task in slot $s$ \\
          
      \hline
      \multicolumn{2}{c}{Heuristic planner variables} \\
      \hline

      $Q$ & List with the queue of tasks assigned to each robot \\
      $\overline{\mathcal{T}}$ & List with all task fragments to be allocated \\
      $M_R$ & Binary matrix indicating robot pre-recharges \\
      $\mathcal{R}^c$ & Set of compatible robots with a task \\
      $\mathcal{R}_t,\,\tilde{\mathcal{R}}_t$ & Best and auxiliary robot coalitions for task $t$ \\
      $\overline{B}_t$ & Lower bound for the remaining battery time of robots compatible with task $t$ \\
      $n^c_t$ & Number of robots compatible with task $t$ \\
      $n^e_t$ & Number of excess compatible robots for task $t$ \\
      $f_t$ & Recharge pattern frequency for task $t$ \\
            
      \hline
  \end{tabular}
  \label{tab:list_of_symbols}
\end{table}
    
\subsection{Preliminary definitions}

Let $\mathcal{R}$ be a set of $n$ heterogeneous robots; each robot $r \in \mathcal{R}$ has an initial position $p_{r,0} \in \mathbb{R}^3$, a traveling speed $v_r$, a maximum battery time $B_r^{\maxs}$ (battery  autonomy), and an initial battery time consumed $B_{r,0}$. Let $\mathcal{T}$ be the set of $m$ tasks to be executed by the team; each task $t \in \mathcal{T}$ has a spatial location $p_t$, an estimated execution time $T_t^{e}$, a deadline to complete the task $T_t^{\maxs}$, and a number of robots needed $N_t$ (coalition size).\footnote{Note that this requirement could be hard or soft depending on the coalition size flexibility of the task.} Apart from the actual tasks, we include an additional task $t_R$ so that the robots can recharge their batteries at one of the available base stations, defining $\tilde{\mathcal{T}} = \mathcal{T} \cup t_R$. Our formulation is agnostic to the method of selecting the \emph{best} station to recharge at each point in time, and we assume a fixed execution time $T_{t_R}^{e}$ to fully recharge batteries. 

The  heterogeneous capabilities of the robots are encoded through a set of binary variables $H_{r,t}\in \lbrace 0,1\rbrace$, where $H_{r,t} = 1$ if robot $r$ has the hardware required to execute task $t$, and $0$ otherwise. For each robot--task pair, we define a displacement time $T^d_{r,t_1,t_2}$, which is the estimated time to navigate robot $r$ from the location of task $t_1$ to the location of task $t_2$. We compute this navigation time using Euclidean distances between tasks (which may be available from a topological map) and the speed of each robot $v_r$. In order to model the initial positions of the robots, $t_1 \in \tilde{\mathcal{T}} \cup \mathcal{T}_0$ and $t_2 \in \tilde{\mathcal{T}}$, where $\mathcal{T}_0$ represents a set of $n$ auxiliary fictitious tasks, each located at the initial position of each robot $p_{r,0}$. 

To model the decision variables, we need to decide which tasks are allocated to each robot and in which order. For this, we introduce the concept of time slots: each robot has a task queue made up of a series of slots of variable duration (let $\mathcal{S}$ be the set of slots for each queue), where tasks can be allocated. The binary decision variables $x_{r,t,s} \in \lbrace 0,1\rbrace$ take value $1$ if task $t$ is assigned to slot $s$ of robot $r$, and $0$ otherwise. The duration of each slot will depend on the time required by the specific task placed in that slot, so slot durations will differ among robots, depending on their task assignment. In general, the sizes of the robot schedules (number of assigned tasks) can differ; however, for implementation purposes, all robot queues have the same size $|\mathcal{S}|$, and only the necessary slots are \emph{activated} for each robot through the variables $x_{r,t,s}$. Furthermore, since decomposable tasks can be split into several fragments, we also define the number of fragments into which each task is divided $n^f_t \in \mathbb{N}$, $n^f_t \in [1,N_f]$, where $N_f$ is the maximum number of fragments into which any task can be divided.\footnote{This variable is forced to be 1 for recharges and non-decomposable tasks.} Given the maximum battery time for robots, we can compute a bound for $N_f$ by considering the worst case of the longest task and check the number of tours that would be needed (each tour implies a new fragment and a recharge).\footnote{A similar method is used to compute a valid value for $|\mathcal{S}|$.} Note that each fragment is allocated to a different robot slot and its execution time is computed as $T^e_t / n^f_t$.


\subsection{Basic constraints}

Some basic constraints must hold regarding the assignment of tasks to slots:
\begin{subequations}\label{eq:task-slot_constraints}
  \begin{align}
    \sum_{t \in \tilde{\mathcal{T}} \cup \mathcal{T}_0} x_{r,t,s} &\leq 1, \label{subeq:one_task_per_slot}\\
    x_{r,t_R,s-1} + x_{r,t_R,s} &\leq 1, \label{subeq:no_consecutive_recharges}\\
    \sum_{t \in \tilde{\mathcal{T}}} x_{r,t,s} &\leq \sum_{t \in \tilde{\mathcal{T}}} x_{r,t,s-1}, \label{subeq:continuity}\\
    \nonumber \forall \, r &\in \mathcal{R}, \: s \in \mathcal{S}.
  \end{align}    
\end{subequations}
%
\eqref{subeq:one_task_per_slot} ensures that there is no more than one task per slot;~\eqref{subeq:no_consecutive_recharges} avoids solutions with consecutive recharges for the same robot; and ~\eqref{subeq:continuity} prevents the existence of free slots between tasks in a queue; tasks should occupy the lowest possible slots in the queue and there should be empty slots after the last assigned task. Moreover, there must be hardware compatibility for the robots assigned to a task:
\begin{equation}\label{eq:hardware_capabilities}
  x_{r,t,s} \leq H_{r,t} ,\quad \forall \, r \in \mathcal{R}, \; t \in \mathcal{T}, \; s \in \mathcal{S}.  
\end{equation}
Additionally, we define some auxiliary variables for counting: $n_t \in \mathbb{N}$ counts the total number of times task $t$ appears among all robot queues, $n^q_t \in \mathbb{N}$ counts the number of queues in which task $t$ appears, and $n^r_t \in \mathbb{N}$ is the number of robots executing task $t$ simultaneously (this is the coalition size and takes value 1 for single-robot tasks). $n^q_{r,t} \in \lbrace 0,1\rbrace$ is a binary variable that takes the value $1$ if task $t$ appears in the queue of robot $r$. Formally,
\begin{subequations}\label{eq:n_vars}
  \begin{align}
    &n_t = \sum_{r \in \mathcal{R}} \sum_{s \in \mathcal{S}} x_{r,t,s} , \label{subeq:n_t}\\
    &n^q_t = \sum_{r \in \mathcal{R}} n^q_{r,t} , \label{subeq:n_qt}\\ 
    &n^r_t \leq n^q_t , \label{subeq:n_rt}\\
    &n_t^r \geq 1 , \label{subeq:n_rt_min}\\
    &n_t = n^r_t \cdot n^f_t , \label{subeq:n_t2}\\
    &\nonumber \forall t \in \mathcal{T} \! .
  \end{align}    
\end{subequations}
Note that~\eqref{subeq:n_rt_min} means that all tasks are assigned to at least one robot, which makes sense if we assume that the multi-robot team has the required hardware and size to accomplish all tasks in the scenario, and that the battery autonomy of the robots is enough to reach each task, execute it, and return to a base station.





\subsection{Time-related variables}

We define $T^d_{r,s}$ as the time required to move robot $r$ to the location of the task allocated to slot $s$, starting at the location of its previous task, assigned to slot $s-1$; $T^e_{r,s}$ as the time required to execute the task allocated to slot $s$; and $T^f_{r,s}$ as the time when the task allocated to slot $s$ finishes.
We also define $T^w_{r,s}$ as the time that robot $r$ has to wait in slot $s$ before starting its allocated task, to coordinate task execution with other robots. Recall that we consider multi-robot tasks, where time synchronization must be enforced so that all the robots involved start the task simultaneously. A similar synchronization is needed when relaying takes place in a relayable task. This is done by establishing this \emph{waiting} time for each robot before starting, so that those arriving earlier wait for the others. The value of the waiting time will depend on the arrival time of all the robots involved.  
More formally,
\begin{subequations}\label{eq:T_r_s}
  \begin{align}
    &T^d_{r,s} = \sum_{t_2 \in \tilde{\mathcal{T}}} \big( \sum_{t_1 \in \tilde{\mathcal{T}} \cup \mathcal{T}_0} T^d_{r,t_1,t_2} \cdot x_{r,t_1,s-1}  \big) \cdot x_{r,t_2,s} ,\label{subeq:Td_r_s}\\
    &T^e_{r,s} = \sum_{t \in \tilde{\mathcal{T}}} ( T^e_t / n^f_t \cdot x_{r,t,s} ) , \label{subeq:Te_r_s}\\
    &T^f_{r,s} = T^f_{r,s-1} + T^d_{r,s} + T^w_{r,s} + T^e_{r,s} , \label{subeq:Tf_r_s}\\
    &T^f_{r,0} = 0 , \label{subeq:Tf_r_0}\\
    &\nonumber \forall \, r \in \mathcal{R}, \; s \in \mathcal{S}.
  \end{align}
\end{subequations}
%
%
%
%
$B_{r,s}$ is the battery time accumulated by robot $r$ at the end of slot $s$, which is computed recursively, with an initial value $B_{r,0}$,~\eqref{subeq:flight_time_s} taking into account that the waiting and execution times during recharge tasks do not consume battery.\footnote{If slot $s$ represents a recharge, $B_{r,s}$ is not zero but the accumulated battery time upon reaching the recharge station. This value is used to verify sufficient battery capacity to reach the station and is subsequently reset to 0 at the beginning of the next slot.} 
Since robot battery autonomy is limited, \eqref{subeq:flight_time_min} constrains the battery time consumed up to any slot to be no greater than the maximum battery time available, always leaving a minimum battery time $B_r^{\mins}$ available for safety reasons. Robots should be able to go back to a recharge station with that amount of battery time.
\begin{subequations}\label{eq:flight_time}
  \begin{align}
    &B_{r,s} = B_{r,s-1} \cdot \bar{x}_{r,t_R,s-1} + T^d_{r,s} + (T^w_{r,s} + T^e_{r,s}) \cdot \bar{x}_{r,t_R,s} , \label{subeq:flight_time_s}\\
    &B_{r,s} \leq B_r^{\maxs} - B_r^{\mins} , \label{subeq:flight_time_min}\\
    &\nonumber \forall \, r \in \mathcal{R}, \; s \in \mathcal{S} ,
  \end{align}
\end{subequations}
\noindent where $\bar{x}_{r,t,s} = 1-x_{r,t,s}$.



\subsection{Time coordination}
\label{sec:timeCoordination}

\begin{figure}[tb!]
  \centering
  \includegraphics[width=.49\columnwidth]{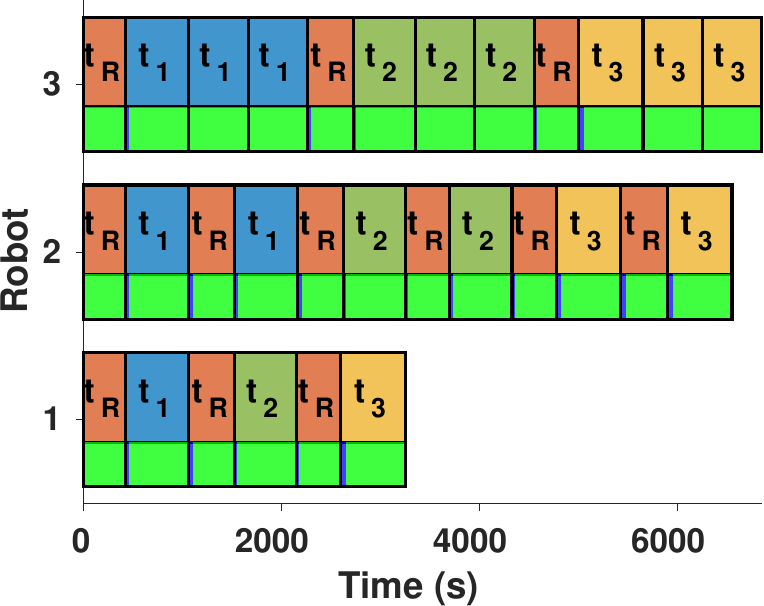}
  \includegraphics[width=.49\columnwidth]{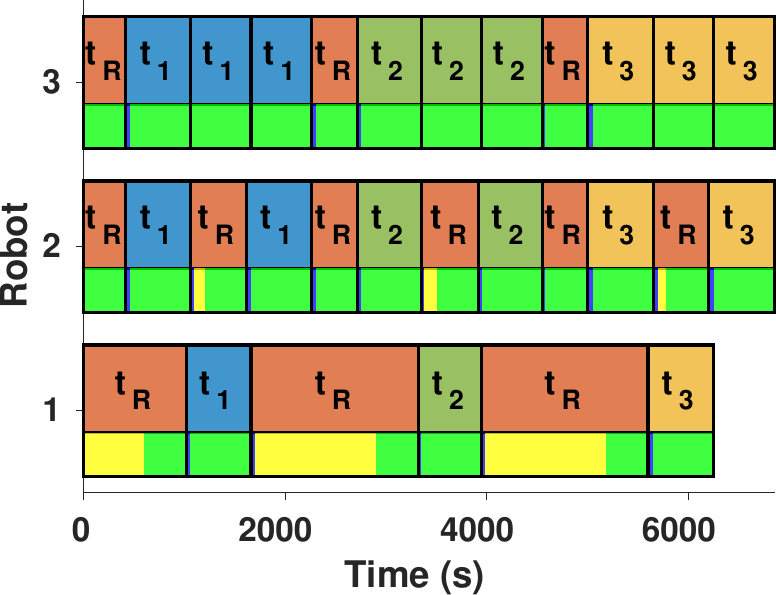}

  \caption{Example with 3 robots executing 3 consecutive multi-robot relayable tasks (with different color code). 
  For each task, displacement time is depicted in blue, waiting time in yellow and execution time in green. Each task is divided into 3 fragments ($n_t^f=3$) executed by coalitions of 2 robots ($n_t^r=2$). Robot 3 executes all tasks and recharges in between, while robots 2 and 1 relay each other to accompany robot 3. Left, solution where robots do not coordinate task execution; right, solution including time coordination constraints.}
  \label{fig:synchronization_example}
\end{figure}

Robots must coordinate their timing while executing a task in two different situations: when several robots need to perform a multi-robot task together or when a relay is carried out. Figure~\ref{fig:synchronization_example} shows an example to illustrate the difference between solutions when the schedules are coordinated and when they are not. This time coordination requires additional constraints which we model with two types of binary decision variables: $y_{t,r_1,s_1,r_2,s_2}, z_{t,r_1,s_1,r_2,s_2} \in \lbrace 0,1 \rbrace$. $y_{t,r_1,s_1,r_2,s_2}=1$ indicates that $t$ is a multi-robot task assigned to slot $s_1$ of robot $r_1$ and slot $s_2$ of robot $r_2$ and that the two robots must be synchronized; and $z_{t,r_1,s_1,r_2,s_2} = 1$ indicates that a fragment of task $t$ assigned to slot $s_1$ of robot $r_1$ is relayed by another fragment of the same task assigned to slot $s_2$ of robot $r_2$. Then robot time coordination is enforced by: 
\begin{subequations}\label{eq:time_coordination}
  \begin{align}
    & T^f_{r_1,s_1} \cdot y_{t,r_1,s_1,r_2,s_2} =  T^f_{r_2,s_2} \cdot y_{t,r_1,s_1,r_2,s_2} , \label{subeq:s_Tf} \\
    & T^f_{r'_1,s_1} \cdot z_{t,r'_1,s_1,r'_2,s_2} =  (T^f_{r'_2,s_2}-T^e_{r'_2,s_2}) \cdot z_{t,r'_1,s_1,r'_2,s_2}, \label{subeq:r_Tf}\\ 
    &\nonumber \forall \, r_1 \neq r_2,r'_1,r'_2 \in \mathcal{R}, \: s_1,s_2 \in \mathcal{S}, \: (r'_1,s_1) \neq (r'_2, s_2), \: t \in \mathcal{T} \! .
  \end{align}    
\end{subequations}
Given a pair of robots performing a multi-robot task, since the task execution time is the same for both, by setting their finish slot times to be equal through~\eqref{subeq:s_Tf}, we ensure that they start task execution simultaneously. In the case of a relay,~\eqref{subeq:r_Tf} equals the time from when the first robot finishes its task fragment to the time when the second robot starts executing its fragment (note that there may be a previous waiting time for synchronization). Although that would not be a proper relay, for implementation purposes, our formulation encodes as a \emph{virtual} self-relay when a robot performs two consecutive fragments of the same task (see an example in Figure~\ref{fig:synchronization_vars_example}). Note that~\eqref{subeq:s_Tf} holds for pairs of distinct robots synchronizing, as it does not make sense for a robot to synchronize with itself in a multi-robot task. However, in~\eqref{subeq:r_Tf} a robot could relay itself, but in that case the time slots must be different. Furthermore, for each time coordination, we need to ensure that the two slots being coordinated have the same task associated with them:
\begin{subequations}\label{eq:time_coordination2}
  \begin{align}
    & y_{t,r_1,s_1,r_2,s_2} \leq x_{r_1,t,s_1} , \label{subeq:y_leq_x1}\\
    & y_{t,r_1,s_1,r_2,s_2} \leq x_{r_2,t,s_2} , \label{subeq:y_leq_x2}\\
    & z_{t,r_1,s_1,r_2,s_2} \leq x_{r_1,t,s_1} , \label{subeq:z_leq_x1}\\
    & z_{t,r_1,s_1,r_2,s_2} \leq x_{r_2,t,s_2} , \label{subeq:z_leq_x2}\\ 
    &\nonumber \forall \, r_1,r_2 \in \mathcal{R}, \; s_1,s_2 \in \mathcal{S}, \; t \in \mathcal{T} \! .
  \end{align}    
\end{subequations}
Additionally, there are constraints on the flow of time coordination variables. First, each task fragment cannot be relayed by more than one subsequent fragment and similarly, it cannot be relaying more than one previous fragment:  
\begin{subequations}\label{eq:max_relays}
  \begin{align}
    & \sum_{r_2 \in \mathcal{R}} \sum_{s_2 \in \mathcal{S}} z_{t,r_1,s_1,r_2,s_2} \leq 1, \; \forall \, r_1 \in \mathcal{R}, \: s_1 \in \mathcal{S}, t \in \mathcal{T} \! ; \label{subeq:y_relayed_once}\\
    & \sum_{r_1 \in \mathcal{R}} \sum_{s_1 \in \mathcal{S}} z_{t,r_1,s_1,r_2,s_2} \leq 1, \; \forall \, r_2 \in \mathcal{R}, \: s_2 \in \mathcal{S}, t \in \mathcal{T} \! .\label{subeq:y_relay_once}
  \end{align}    
\end{subequations}
Note that the above limitation does not apply to variables of type $y$, since a task instance in a multi-robot task must be synchronized with all the instances corresponding to the other $n_t^r-1$ robots performing the task in parallel. Thus,~\eqref{eq:max_synchronizations} ensures that all instances of a given multi-robot task $t$ (which may also be decomposable) are grouped into sets of exactly $n^r_t$ fragments, and time synchronization only occurs between fragments belonging to the same group.
\begin{subequations}\label{eq:max_synchronizations}
  \begin{align}
     y_{t,r_1,s_1,r_2,s_2} &= y_{t,r_2,s_2,r_1,s_1} , \\ 
    \nonumber &\forall \, r_1,r_2 \in \mathcal{R}, \: s_1,s_2 \in \mathcal{S}, \: t \in \mathcal{T}, \\
     (n_t^r - 1) \cdot x_{r_1,t,s_1} &= \sum_{r_2 \in \mathcal{R}} \sum_{s_2 \in \mathcal{S}} y_{t,r_1,s_1,r_2,s_2} , \label{subeq:max_synchronizations}\\ 
    \nonumber &\forall \, r_1 \in \mathcal{R}, \:s_1 \in \mathcal{S}, \:t \in \mathcal{T} \! .
  \end{align}    
\end{subequations}
Lastly, the total number of relays for a given task is also bounded:
\begin{equation}
  n_t - n_t^r = \sum_{r_1 \in \mathcal{R}} \sum_{s_1 \in \mathcal{S}} \sum_{r_2 \in \mathcal{R}} \sum_{s_2 \in \mathcal{S}} z_{t,r_1,s_1,r_2,s_2}, \: \forall \, t \in \mathcal{T} \! . \label{eq:max_total_relays}  
\end{equation}
%
Figure~\ref{fig:synchronization_vars_example} depicts an example with its corresponding time coordination variables to illustrate the flow constraints in~\eqref{eq:max_synchronizations} and~\eqref{eq:max_total_relays}. It can be seen that all the task instances ($n_t=12$) are grouped into 3 sets of 4 fragments and, within each group, each robot synchronizes with the other 3 performing the task. Moreover, all synchronization variables $y$ are bidirectional and the total number of activated relay variables $z$ in the example is $n_t-n_t^r = 8$.

\begin{figure}[tb]
  \centering
  \includegraphics[width=0.68\columnwidth]{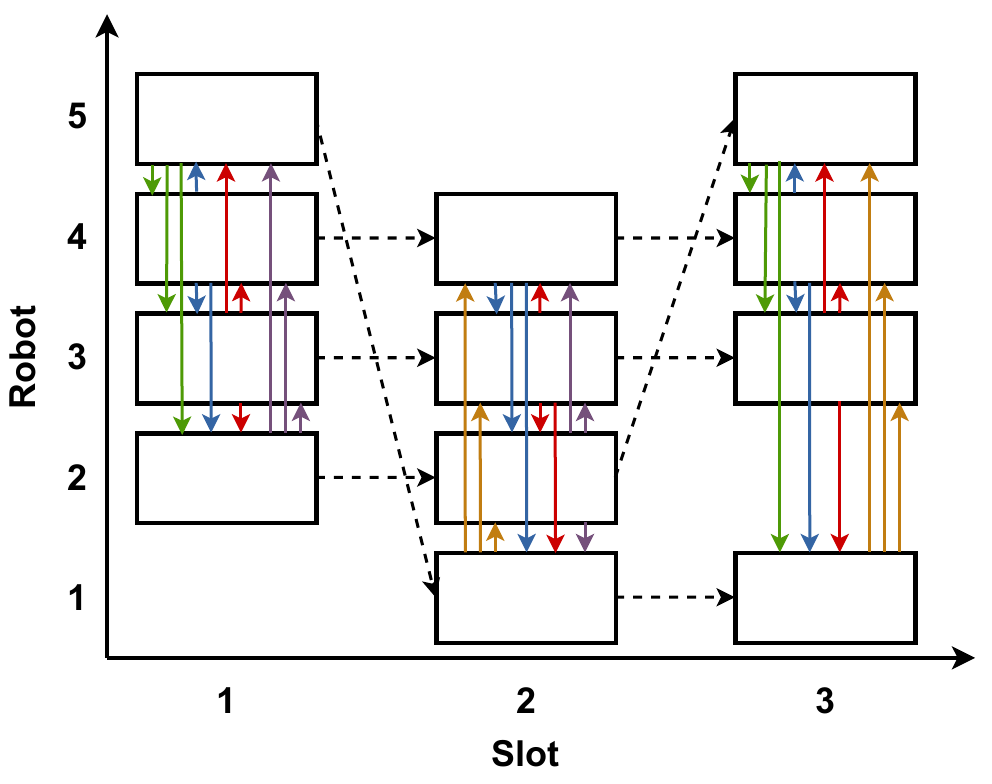}
  \vspace{-1mm}
  \caption{Time coordination example with 5 robots performing a multi-robot relayable task, which is divided into 3 fragments ($n_t^f=3$) executed by coalitions of 4 robots ($n_t^r=4$). Recharge tasks are not shown. Dashed black arrows indicate that the corresponding relay variable ($z$) is activated, and solid colored arrows indicate the activated synchronization variables ($y$), with a different color for each robot. A robot executing two consecutive fragments of the same task is modeled as a \emph{self} relay.}
  \label{fig:synchronization_vars_example}
  \vspace{-1em}
\end{figure}


\subsection{Objective function}

We propose a multi-objective cost function~\eqref{eq:costFunction} with four terms to be minimized:\footnote{Note that our formulation could easily accommodate other typical objectives such as minimizing the total traveled time by all robots.} 1) makespan, the time by which the last robot finishes its last task; 2) delays for task completion with respect to their deadlines; 3) waiting times for synchronization; and 4) deviation of the coalition size in multi-robot tasks with respect to the ideal size.
\begin{subequations}\label{eq:costFunction}
  \begin{align}
    f_{1} &= \frac{Z}{\eta_1} , \label{subeq:f1}\\
    f_{2} &= \frac{\sum_{r \in \mathcal{R}, s \in \mathcal{S}} \Delta T^{max}_{r,s}}{\eta_2} , \label{subeq:f2}\\
    f_{3} &= \frac{\sum_{r \in \mathcal{R}, s \in \mathcal{S}} T^w_{r,s}}{\eta_3} , \label{subeq:f3}\\
    f_{4} &= \frac{\sum_{t \in \mathcal{T}} V_t}{\eta_4}\label{subeq:f4} ,
  \end{align}
\end{subequations}
\noindent where $\eta_1$, $\eta_2$, $\eta_3$, and $\eta_4$ are normalization constants so that all cost terms are on the same scale.~\eqref{subeq:f1} introduces an auxiliary variable $Z$ to encode the makespan, which can be done by adding the constraint
\begin{equation}\label{eq:makespan}
  Z \geq T^f_{r,|\mathcal{S}|} , \; \forall \, r \in \mathcal{R} ,
\end{equation}
\noindent which forces $Z$ to be greater than the completion time for each robot's queue~\eqref{subeq:Tf_r_s}. $\eta_1$ is computed by considering a worst-case scenario in which a single robot executes all tasks, recharging when needed. In~\eqref{subeq:f2}, $\Delta T^{\maxs}_{r,s}$ are slack variables that represent the delay of robot $r$ in completing the task assigned to slot $s$. We set:
\begin{equation}\label{eq:d_t_max_r_s}
  \begin{split}
    \Delta T^{\maxs}_{r,s} &\geq \sum_{t \in \mathcal{T}} x_{r,t,s} \cdot ( T^{f}_{r,s} - T^{\maxs}_{t}) , \\
    \Delta T^{\maxs}_{r,s} &\geq 0 , \quad \forall \, r \in \mathcal{R}, \; s \in \mathcal{S}  .
  \end{split}
\end{equation}
Note that since it does not make sense to consider deadlines for recharges or unassigned slots,~\eqref{eq:d_t_max_r_s} reduces to $\Delta T^{\maxs}_{r,s} \geq 0$ when the task assigned to slot $s$ does not belong to $\mathcal{T}$. $\eta_2$ is computed as the maximum deadline $T^{\maxs}_t,\; \forall t \, \in \mathcal{T}$.~\eqref{subeq:f3} is the normalized overall waiting time, where $\eta_3$ is calculated as the maximum battery time autonomy $\max{\lbrace B^{\maxs}_r \rbrace},\, \forall \, r \in \mathcal{R}$.
Lastly, recall that our multi-robot tasks may accept \emph{variable} coalition sizes, so~\eqref{subeq:f4} penalizes tasks allocated a number of robots that is lower than their ideal coalition size $N_t$. This deviation in the coalition size is defined as:
\begin{equation}\label{eq:V_t}
  \begin{split}
  V_t &= N_t - n_t^r , \\ 
  V_t &\geq 0 , \\
  \forall \, t &\in \mathcal{T}, \: N_t > 0.
  \end{split}
\end{equation}
%
%
For tasks with an \emph{unspecified} coalition size, this penalty does not apply; by convention, we set $N_t=0$ for these tasks, which is why \eqref{eq:V_t} only applies when this parameter is greater than zero. For tasks with a \emph{fixed} coalition size, an extra constraint is added to force $V_t$ to be zero, making $n_t^r$ and $N_t$ coincide. $\eta_4$ is computed by adding the maximum deviation for all tasks, where this maximum deviation is $N_t - 1$; i.e., allocating a single robot to the task. 


\subsection{Optimization problem}

To sum up, our complete~\ac{MILP} problem can be formulated as follows:
\begin{equation}\label{eq:MILP}
  \begin{split}
      &\minimize_{\lbrace x_{r,t,s}, T^w_{r,s}, n_t^f \rbrace}{ f_1 + f_2 + f_3 + f_4} \\
      %
      &\;\; \text{subject to} \\
      &\quad \quad \text{task-slot assignment \eqref{eq:task-slot_constraints},} \\
      &\quad \quad \text{hardware compatibility \eqref{eq:hardware_capabilities},} \\
      &\quad \quad \text{counting variables \eqref{eq:n_vars},} \\
      &\quad \quad \text{time slot variables \eqref{eq:T_r_s},} \\
      &\quad \quad \text{battery autonomy \eqref{eq:flight_time},} \\
      &\quad \quad \text{time coordination \eqref{eq:time_coordination}--\eqref{eq:max_total_relays},} \\
      &\quad \quad \text{makespan \eqref{eq:makespan},} \\
      &\quad \quad \text{task delays \eqref{eq:d_t_max_r_s},} \\
      &\quad \quad \text{coalition size deviations \eqref{eq:V_t}.} \\
      %
      %
      %
   \end{split}
\end{equation}
%


\subsection{Linearization}

Finally, it is important to remark that some of the equations contain non-linear elements in the form of decision variables that are multiplied together. In particular, these non-linearities appear in~\eqref{subeq:Td_r_s},~\eqref{subeq:Te_r_s},~\eqref{subeq:flight_time_s},~\eqref{eq:time_coordination},~\eqref{subeq:max_synchronizations}, and~\eqref{eq:d_t_max_r_s}. In order to keep the formulation as an~\ac{MILP}, we circumvent this issue with linearization techniques by using additional auxiliary variables~\cite{Sabnis2019,Ta2004}. Basically, non-linear terms are replaced by new variables and additional linear constraints that force the value of the new variable to be equal to the term that it substitutes. 

Thus, each product of two binary decision variables $b_1 \cdot b_2$ is replaced by a new variable $b' \in \lbrace 0,1\rbrace$, and the following constraints are included to force $b'$ to be equal to the original product:
\begin{equation}\label{eq:linearization_binary_binary}
   b' \leq b_1, \quad b' \leq b_2, \quad b' \geq b_1 + b_2 - 1.
\end{equation}
In the same way, each product of a binary and a real\footnote{The same linearization is used in the case of integer variables.} decision variable $b \cdot r$, where $b \in \lbrace 0,1 \rbrace$ and $r \in \mathbb{R}$, is replaced by a new variable $r' \in \mathbb{R}$, and the following constraints are included:
\begin{equation}\label{eq:linearization_binary_real}
  \begin{split}
    &b \cdot r^{\mins} \leq r', \quad r - r^{\maxs} \cdot (1 - b) \leq r', \\ 
    &b \cdot r^{\maxs} \geq r', \quad r - r^{\mins} \cdot (1 - b) \geq r',
  \end{split}
\end{equation}
\noindent where $r^{\mins}$ and $r^{\maxs}$ are the lower and upper limits, respectively, of the real variable $r$.
For the sake of brevity, the complete set of equations after linearization is not shown here, but can be accessed in the online version of our code (see Section~\ref{sec:experimental_results}).

\section{Heuristic Planner}
\label{sec:planner}

The problem posed is NP-hard, as proved in Section~\ref{subsec:proof_of_NP_hardness}. Therefore, solving an optimal formulation such as that presented in Section~\ref{sec:milp} becomes computationally intractable as the number of robots and tasks involved increases. In this section, we propose a heuristic solver to find approximate solutions in such a way that 1) the plans comply with all problem constraints as formulated in Section~\ref{sec:milp}; and 2) they can be computed efficiently enough to operate in real time. 

There are problem-specific heuristics for \ac{MRTA} scenarios similar to ours, in which they build an initial valid solution and then iterate over it, applying operations to improve it; e.g., merging independent tours into a single tour to save costs~\cite{agarwal_tro24}, or creating new solutions by removing random robots or tasks and rearranging them~\cite{wilde_mrs23}. Given the complexity of our constraints, such a strategy would be hard to follow in our problem, as even minor operations may yield solutions that are not valid anymore: our time coordination constraints cannot always be fixed by adding recharges and/or adjusting waiting times.
Another approach is to use metaheuristic algorithms such as genetic~\cite{miloradovic_tcyber22}, simulated annealing~\cite{Liu2023}, or~\ac{LKH}, which is a widely used heuristic to solve variants of~\ac{TSP}~\cite{Maini2019, Mathew2015}. The issue is that those approaches work more poorly in complex problems with large search space, struggling to find good solutions in reasonable time. Therefore, we propose a new heuristic algorithm leveraging properties of the problem, in which tasks are ordered following certain criteria and then assigned to robot coalitions. 

\begin{algorithm}
  \footnotesize
  \caption{{ \sc{HeuristicPlanner\,($\mathcal{R},\mathcal{T}$)}}}
  \label{alg:HeuristicPlanner}
  \begin{algorithmic}[1]
    \State $\lbrace n_t^r,n_t^f,f_t\rbrace_{t \in \mathcal{T}} \gets$ \texttt{EstimateFragments}($\mathcal{R}, \mathcal{T}$)
    \For{$r$ in $\mathcal{R}$} $Q[r] \gets \emptyset$ \Comment{Initialize task queues}
    \EndFor

    \State $\overline{\mathcal{T}} \gets \emptyset$ 
    
    \For {$t$ in $\mathcal{T}$}
      \If {$t$ is  \emph{non-decomposable} \textbf{or} \emph{relayable}}
        \State $\overline{\mathcal{T}}$.\texttt{add}($t$)
      \Else
        \State $\overline{\mathcal{T}}$.\texttt{add}(\texttt{Repelem}($t,n_t^f$))
      \EndIf
    \EndFor
    \While {$\overline{\mathcal{T}} \neq \emptyset$}
        \State $Z \gets$ \texttt{ComputeMakespan}()
        \For {$t$ in $\overline{\mathcal{T}}$} \Comment{Best coalition for each task}
            \State $\lbrace \mathcal{R}_t, M_R, \Delta Z, \Delta T^w  \rbrace \gets$ \texttt{SelRobots}($\mathcal{R}, t, Z$)
        \EndFor
        
        \State $t \gets$ \texttt{Sort}($\overline{\mathcal{T}}$).\texttt{pop}() \Comment{Allocate priority task}
        \For {$r$ in $\mathcal{R}_t$}
            \State $Q[r]$.\texttt{addTaskToQueue}($t,M_R[r,t], T^f_t$)
        \EndFor

    \EndWhile

    \State \Return $Q$

  \end{algorithmic}
\end{algorithm}

Algorithm~\ref{alg:HeuristicPlanner} summarizes our heuristic solver, which receives the set of robots $\mathcal{R}$ and tasks $\mathcal{T}$ and returns a list $Q$ with the queue of tasks assigned to each robot. First, the algorithm decides the coalition size for each task ($n_t^r$) and the number of fragments into which it will be divided ($n^f_t$). This is done calling procedure \texttt{EstimateFragments} (Algorithm~\ref{alg:EstimateFragments}), which will be explained later. Then, after initializing the robot queues, a list $\overline{\mathcal{T}}$ with all task fragments to be allocated is created (lines 3--8). For each fragmentable task $t$ in $\mathcal{T}$, $n^f_t$ equal fragments are repeated and added to $\overline{\mathcal{T}}$ (procedure \texttt{Repelem}), as each fragment will be allocated to a robot coalition independently. Non-decomposable and relayable tasks are included as a single element, as they will be allocated to a coalition as a whole. In each iteration of the main loop (lines 9--15), an element of $\overline{\mathcal{T}}$ is allocated until the list is empty. Given all task assignments so far, the current makespan $Z$ of the plan is computed, and for each remaining task $t$ in $\overline{\mathcal{T}}$, the best coalition is determined calling procedure \texttt{SelRobots} (lines 11--12). This procedure will be detailed later (Algorithm~\ref{alg:RobotSelection}) and it returns the set of robots $\mathcal{R}_t$ in the best coalition selected for the task, a matrix $M_R$ with binary flags indicating whether the robots need or do not need to include a pre-recharge to execute the task, and the increase in the makespan $\Delta Z$ and in the waiting time $\Delta T^w$ introduced when assigning the selected coalition to the task. Once the best coalitions are computed for all remaining tasks, these tasks are sorted and the one with top priority is extracted (line 13). Tasks are ordered lexicographically according to several criteria:
1) tasks that are close to their deadline $T_t^{\maxs}$, giving priority to those that if not assigned now, would exceed their deadline; 2) tasks whose selected coalition introduces less $\Delta Z$; 3) tasks whose selected coalition introduces less $\Delta T^w$; 4) tasks with higher execution time ($T^e_t/n^f_t$); 5) tasks with a higher $n^r_t/n^c_t$ proportion; i.e., their coalition size with respect to the number of compatible robots for the task ($n^c_t$); and 6) tasks with less displacement time ($T^d_t$). The task with top priority is allocated to its selected coalition using procedure \texttt{addTaskToQueue} (lines 14--15), which adds task $t$ to the queue of robot $r$, including a pre-recharge task if $M_R[r,t] == 1$ and using $T^f_t$ (calculated in \texttt{SelRobots}) as coordination time to compute the waiting time. These task allocation heuristics are based on the optimization problem formulated in Equation~\ref{eq:MILP}: the task assignment order (line 13) follows similar optimization criteria, and robot coalition selection aims to minimize mission makespan. Note that the objective function in Equation~\ref{eq:MILP} cannot be directly used for task ordering, as it requires adaptation for single-task scoring.

\begin{algorithm}
  \footnotesize
  \caption{\sc{SelRobots($\mathcal{R}, t, Z$)}}
  \label{alg:RobotSelection}
  \begin{algorithmic}[1]
    \State $\mathcal{R}^c \gets$ \texttt{GetAvailableRobots}()
    

    \For {$r$ in $\mathcal{R}^c$}
      \State $M_R[r,t] \gets 1$ if a pre-recharge is needed, $0$ otherwise
      \State $T^f_r \gets$ robot's finish time after $M_R[r,t]$ and $t$
    \EndFor

    \State $\mathcal{R}_t \gets$ \texttt{Sort}($\mathcal{R}^c$).\texttt{get}($n_t^r$) \Comment{Pick earliest $n_t^r$ robots}
    
    \State $T^f_t \gets \max_{r \in \mathcal{R}_t} T^f_r$ \Comment{Task coordination time}

    \State $changed \gets$ \texttt{True}
    \While {$changed$}

      \State $changed \gets$ \texttt{False}

      \For {$r$ in $\mathcal{R}_t$}
        \State $T^w_r \gets$ waiting time for robot $r$ according to $T^f_t$
        \State $M_R[r,t] \gets$ pre-recharge flag including $T^w_r$
        \State $T^f_r \gets$ robot's finish time considering $M_R[r,t]$
      \EndFor

      \State $\tilde{\mathcal{R}}_t \gets$ \texttt{Sort}($\mathcal{R}^c$).\texttt{get}($n_t^r$) \Comment{Update robot selection}

      \If {$\tilde{\mathcal{R}}_t \neq \mathcal{R}_t$}
        \State $\mathcal{R}_t \gets \tilde{\mathcal{R}}_t$ , $T^f_t \gets \max_{r \in \mathcal{R}_t} T^f_r$

        \State $changed \gets$ \texttt{True}
      \EndIf

    \EndWhile

    \If {$T^f_t > Z$}
      $\Delta Z \gets T^f_t - Z$
    \Else
      $\hspace{1em} \Delta Z \gets 0$
    \EndIf
    \State $\Delta T^w \gets \sum_{r \in \mathcal{R}_t} T^w_r$
    
    \State \Return $\mathcal{R}_t , M_R, \Delta Z , \Delta T^w$
  \end{algorithmic}
\end{algorithm}

Algorithm~\ref{alg:RobotSelection} receives the set of robots $\mathcal{R}$ and the makespan $Z$ produced by the current robots' task queues, and selects the best robot coalition for task $t$. The algorithm selects the compatible robots that finish their task queue earliest, ensuring that the constraints in problem~\ref{eq:MILP} are satisfied after each task allocation. First, $\mathcal{R}^c$ is the set of robots with compatible hardware\footnote{For the sake of simplicity, we assume that there are enough robots to execute the available tasks, otherwise the algorithm would return that there is no valid solution.} and enough maximum battery $B_r^{\maxs}$ to execute the task and go back to a recharging station (line 1). Second, for each compatible robot, it is determined whether a pre-recharge task has to be included or not and, according to this, the finish time of the robot queue $T^f_r$ after including task $t$ is calculated (lines 2--4). Third, the compatible robots are sorted according to their $T^f_r$ and the coalition $\mathcal{R}_t$ is built selecting the earliest $n^r_t$ robots (line 5). In order to execute the task in a coordinated manner, the coordination time $T^f_t$ is computed; i.e., the time at which the latest selected robot will finish the task (line 6).
The following procedure is then repeated iteratively (lines 8--17) until the selected coalition does not change, ensuring time coordination and battery constraints: add the required waiting time to coordinate the selected robots, recompute if any pre-recharge is needed, reorder the compatible robots with the new finish queue times, and select the earliest $n^r_t$ robots. Convergence is guaranteed because the set of compatible robots is finite. Once a pre-recharge flag is set for $M_R[r, t]$, it is not reset. In each iteration, either a pre-recharge is added for a new robot, or the coalition remains unchanged. In the worst case, the loop terminates after all compatible robots have a pre-recharge. Lastly, once the final coalition for task $t$ is selected, the increase in the makespan and the waiting time are computed, assuming that task assignment (lines 18--20).

Algorithm~\ref{alg:RobotSelection} has a variation for the special case of relayable tasks that are fragmented. In that case, all the fragments belonging to that task are allocated as a whole, taking into account all the available compatible robots, instead of just picking the first $n_t^r$ (lines 5 and 14). All compatible robots are eligible for assignment to task fragments following the specific pattern in Figure~\ref{fig:template_matrix}. This pattern satisfies battery and time coordination constraints for relayable tasks, ensuring sufficient robots are always available to perform relays while others recharge. The pattern is built with two key parameters computed in Algorithm~\ref{alg:EstimateFragments}; the number of fragments into which the task is divided $n^f_t$, and the pattern frequency $f_t$, which indicates the number of fragments of task $t$ to be assigned to the same robot before introducing a recharge operation. 

\begin{algorithm}
  \footnotesize
  \caption{{\sc {EstimateFragments($\mathcal{R}, \mathcal{T}$)}}}
  \label{alg:EstimateFragments}
  \begin{algorithmic}[1]
    \For {$t \in \mathcal{T}$}

      \If {$t$ is \emph{fixed}} \Comment{Decide robots per task}
        \State $n_t^r=N_t$
      \ElsIf {$t$ is \emph{variable} \textbf{or} \emph{unspecified}}
        \State $n_t^r=1$
      \EndIf

      \State $\mathcal{R}^c \gets$ \texttt{GetCompatibleRobots}($t$)
      \State $\overline{B}_t \gets \min_{r \in \mathcal{R}^c, t_1 \in \tilde{\mathcal{T}}}( B^{max}_r - B^{min}_r - 2 \cdot\max \, T^d_{r,t_1,t})$

      \If {$T^e_t \le \overline{B}_t$ \textbf{or} $t$ is \emph{non-decomposable}}

        \State $n_t^f \gets 1$ , $f_t \gets 0$

      \ElsIf {$t$ is \emph{fragmentable}}
        \State $n^f_t \gets \lceil T^e_t / \overline{B}_t \rceil$ , $f_t \gets 0$ 

      \Else \Comment{Relayable tasks}

        \State $n^c_t = |\mathcal{R}^c|$ , $n^e_t = n^c_t - n_t^r$ 

        \State $cf \gets \lceil n_t^r / n^e_t \rceil$

        \State $n^f_t \gets \lceil cf \cdot T^e_t / \overline{B}_t \rceil$
        \State $f_t \gets \lfloor \overline{B}_t / (T^e_t / n^f_t) \rfloor$

      \EndIf
      
    \EndFor

    \State \Return $\lbrace n_t^r,n_t^f,f_t\rbrace_{t \in \mathcal{T}}$

  \end{algorithmic}
\end{algorithm}

Algorithm~\ref{alg:EstimateFragments} first determines the coalition size for each task (lines 1--5); for simplicity, tasks with unspecified and variable coalition size are allocated to a single robot. 
Then, to estimate the number of fragments, it calculates the remaining battery capacity $\overline{B}_t$ for the compatible robot with the lowest capacity (lines 6--7), after accounting for the minimum safety margin and the maximum round-trip travel time to task $t$. Non-decomposable tasks, or tasks whose execution time is shorter than $\overline{B}_t$, are not fragmented (lines 8--9). Fragmentable tasks are split into fragments with duration shorter than $\overline{B}_t$ (lines 10--11), without the need to follow any specific pattern for recharges ($f_t=0$). For relayable tasks, $\overline{B}_t$ and the number of excess compatible robots $n^e_t$ are taken into account to compute $n^f_t$ and $f_t$ (lines 12--16)\footnote{We assume sufficient robots to execute all tasks. If no excess robots were available ($n^e_t=0$), all compatible robots would have to execute the task in parallel, precluding relays.}. Depending on the number of excess robots available for relays, each robot may execute several consecutive fragments according to the allocation pattern in Figure~\ref{fig:template_matrix}.
First, $f_t$ consecutive fragments of the task are concatenated, followed by the corresponding recharge task, until a row with $n^f_t$ slots is built. 
The same procedure is followed to create $n^c_t$ rows, but shifting the pattern of each row one slot to the left (Figure~\ref{subfig:full_matrix_template}).
At this point, there may be columns with more than $n^r_t$ task fragments. For those columns, the extra fragments are replaced by recharge tasks. To reduce robot displacements and save slots, the replaced fragments are chosen from those that are preceded or followed (in their rows) by a recharge task (Figure~\ref{subfig:corrected_matrix_template}). 
After that, recharge tasks that are at the beginning or the end of a row can be removed and consecutive recharges merged in a single slot without modifying the relative positions of the remaining fragments (Figure~\ref{subfig:reduced_matrix_template}).
Although the pattern is originally built using all compatible robots $n^c_t$, after the above steps, there may be empty rows (i.e., unused compatible robots) that would be removed. 
Lastly, the remaining rows are reordered according to their starting point (Figure~\ref{subfig:sorted_reduced_matrix_template}).

\begin{figure}[tb!]  
  \begin{minipage}[t]{\linewidth}
    \raggedleft
    \includegraphics[scale=0.27]{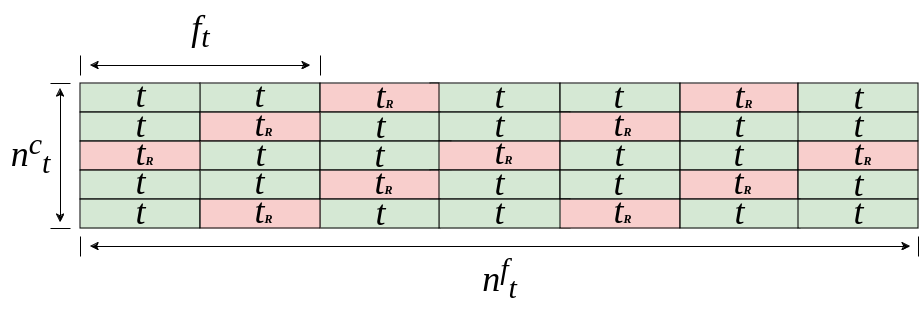}
    \subcaption{Full matrix pattern}
    \label{subfig:full_matrix_template}
  \end{minipage}
  
  
  \begin{minipage}[t]{\linewidth}
    \raggedleft
    \includegraphics[scale=0.27]{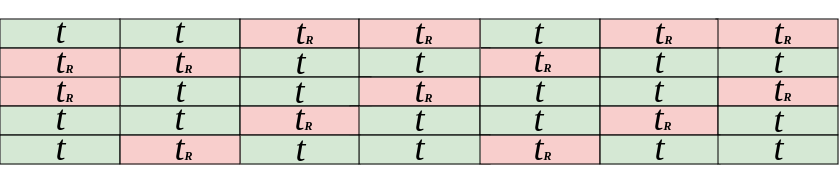}
    \subcaption{Corrected matrix pattern}
    \label{subfig:corrected_matrix_template}
  \end{minipage}
  
  
  \begin{minipage}[t]{\linewidth}
    \raggedleft
    \includegraphics[scale=0.27]{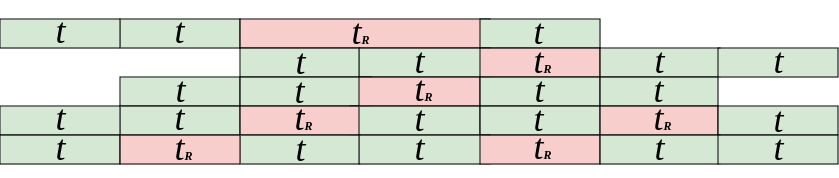}
    \subcaption{Reduced matrix pattern}
    \label{subfig:reduced_matrix_template}
  \end{minipage}
  
  
  \begin{minipage}[t]{\linewidth}
    \raggedleft
    \includegraphics[scale=0.27]{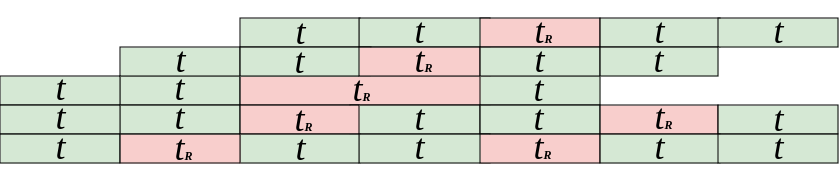}
    \subcaption{Sorted reduced matrix pattern}
    \label{subfig:sorted_reduced_matrix_template}
  \end{minipage}
  
  \caption{From top to bottom, the steps to build the robot allocation pattern for a relayable task. Example with $f_t=2$, $n^f_t=7$, $n^c_r=5$, and $n^r_t=3$.}
  \label{fig:template_matrix}
\end{figure}

In summary, the robot selection in Algorithm~\ref{alg:RobotSelection} differs for relayable tasks as follows. In line 5, instead of picking the first $n_t^r$ robots, the  matrix pattern described above is built and a compatible robot is selected for each row. This is done by sorting robots according to their finish time ($T^f_r$) and matrix rows according to their initial time, so that robots with later finish time match rows that start later. After this, within the loop, the waiting times for each selected robot and the pre-recharge flags (in case the robot needs a recharge before starting its row) are recomputed until there are no more changes in the final $T^f_t$, but without varying the set of selected robots. More specifically, line 14 becomes $\tilde{T}^f_t \gets \max_{r \in \mathcal{R}_t} T^f_r$, the condition of line 15 would be $\tilde{T}^f_t \neq T^f_t$, and line 16 would just be $T^f_t \gets \tilde{T}^f_t$. Finally, note that in line 15 of Algorithm~\ref{alg:HeuristicPlanner}, for the case of a relayable task, procedure \texttt{addTaskToQueue} will add to each of the selected robots' queues the corresponding row of the matrix pattern as a whole, including a pre-recharge operation at the beginning if indicated by $M_R$.

\section{Mission Replanning and Execution}
\label{sec:online_replaning}

\begin{figure}[tb]
  \centering
  \includegraphics[width=0.75\columnwidth]{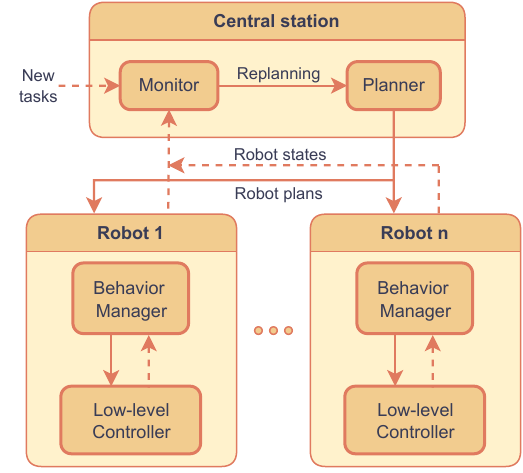}
  \caption{System architecture for mission (re-)planning and execution. The components related to task planning (top layer) run on a central station, whereas those in charge of task execution (bottom layer) are allocated on board each robot. Dashed lines depict feedback information and solid lines indicate action requests.}
  \label{fig:system_architecture}
\end{figure}

In this work, we deal with dynamic scenarios where during the execution of the mission, the planning conditions may deviate from their initial states. For instance, a robot may become delayed or simply fail while performing its tasks, or new task requests could arrive during the mission. In these cases, the running plan may not be valid anymore, making a replanning procedure necessary to adapt to the new circumstances. We integrate our \ac{MRTA} algorithms into a mission planning and execution framework that is robust to dynamic settings, allowing online replanning due to unexpected events. The system architecture, depicted in Figure~\ref{fig:system_architecture}, was presented in our previous work~\cite{calvo_icuas22}. The components are separated into two interleaved layers; one for mission (re-)planning and another for mission execution.  

Mission planning (top layer in Figure~\ref{fig:system_architecture}) is run in a centralized manner; given the current state of the scenario (i.e., the pending tasks and the location and battery status of the available robots), the \emph{Planner} computes an optimal plan for the team (Section~\ref{sec:planner}), deciding on the best task allocation. After this, each robot receives its plan, which consists of a schedule with its assigned tasks, and mission execution (bottom layer in Figure~\ref{fig:system_architecture}) is run in a distributed fashion. Each robot runs its own \emph{Behavior Manager} onboard, which is an executive component that, for each assigned task, extracts the task parameters and activates the \emph{Low-level Controller} to handle execution from a robot control point of view. More specifically, each Behavior Manager implements a state machine encoded as a~\ac{BT}~\cite{Colledanchise2017TRO}, which monitors task outcomes (whether execution has finished successfully or not) and the robot state (whether it has failed or become delayed). If a robot failure is detected, the~\ac{BT} activates contingency actions (e.g., an emergency landing in the case of a~\ac{UAV} running out of battery) and reports the robot's unavailability to the Monitor.\footnote{In our implementation, failures are a robot running out of battery or losing connectivity due to a communication dropout, but other hardware issues could easily be accommodated.} In the absence of failures, the Behavior Manager activates the Low-Level Controller, which takes care of navigational actions and collision avoidance so that the robot can execute its task. Note that in multi-robot tasks, the Low-Level Controllers involved may need to share additional information for robot coordination. Depending on the task being executed, different behaviors will be implemented in the Low-Level Controller. For instance, an inspection task may require the robot to activate a camera and navigate through a series or waypoints, while a delivery task may involve pick-up and place actions. More details about the implementation of Low-Level Controllers and~\acp{BT} for a multi-\ac{UAV} inspection application can be consulted in our previous work~\cite{calvo_icuas22}.

The multi-robot plan execution is centrally supervised by the \emph{Monitor} component at all times. This component receives feedback from the Behavior Managers indicating task/robot failures or delays. The Monitor also receives an input signal with new incoming tasks in the scenario.\footnote{Note that a task that is not successfully finished can be considered as a new task to be re-allocated again in future plans.} In case of a robot failure, if the running plan remains valid (i.e., satisfies all problem constraints in Section~\ref{sec:milp}), no action is taken. This can occur if the robot has no assigned tasks or belongs to a robot coalition without a \emph{fixed} size (soft constraint). In the case of a robot delay, the remaining plan may no longer comply with the problem constraints regarding battery life or time coordination, and a repair operation is attempted to adapt robot schedules and maintain all constraints. Thus, a new whole plan is only required in the following circumstances: 1) the arrival of new tasks; 2) a robot failure leading to an invalid plan; or 3) an unsuccessful plan repair operation. The Monitor carries out this replanning for all pending tasks and available robots using the Planner module (Section~\ref{sec:planner}). Tasks already in progress are not interrupted and reallocated unless their coalition size becomes insufficient for completion.

\subsection{Plan repair}

During plan execution, every time a robot finishes a task, we check whether it is delayed; that is, whether the robot will reach the starting position of its next allocated task at a different time instant than originally planed. This could happen because the robot took longer (or less time) than expected while performing the task, or because it ended up in a different position than expected, with the consequent variation in its arrival time to its next task. 
However, sometimes the whole plan could be repaired by \emph{delaying} the remaining tasks and still be a valid plan. The core idea is the following: the robot could accommodate its delay through tasks with associated waiting time by reducing their waiting times accordingly. Thus, for each future task, we accommodate part of the robot delay by updating its waiting time and delaying the corresponding start/finish task time instants accordingly. Then we propagate the remaining delay forward throughout the rest of the plan (see Algorithm~\ref{alg:updateTimeVars}). Any time this delay propagation reaches a time coordination point; i.e., a multi-robot task with several robots starting synchronously or a task where a relay is carried out, all the robots involved also need to update their schedules to comply with time coordination constraints (see Section~\ref{sec:timeCoordination}). Algorithms~\ref{alg:updateSynchTask} and~\ref{alg:updateRelayTask} are in charge of updating robots' plans to resolve the two previous time coordination cases. After a time coordination point is resolved, the delay by one of the robots involved may affect others' plans, and then those robots will also need to propagate forward their new delayed plans. The whole repair procedure is carried out by Algorithm~\ref{alg:repairPlans}, which sweeps the timeline of the multi-robot plan propagating delayed robot schedules, resolving coordination points as they appear. The primary objective is to restore plan validity without task reassignment, and this is done by minimizing schedule extensions (i.e., optimizing makespan).

\begin{algorithm}
  \footnotesize
  \caption{{\sc {UpdateTimeVars\,($r,s_f,\bar{s},\Delta t,\mathcal{R}_d$)}}}\label{alg:updateTimeVars}
  \begin{algorithmic}[1]
  
  \If{$\bar{s}[r] \ge s_f$} \Return
  \EndIf  
  
  \State $s \gets \bar{s}[r]$ , $\delta \gets \Delta t[r]$ 
  
  \While{$s < s_f$ \textbf{and} $\delta > 0$}
    \State $s \gets s+1$
  
    \If{$T^w_{r,s} > 0$}
      \If{$T^w_{r,s} > \delta$}
        \State $T^w_{r,s} \gets T^w_{r,s} - \delta$
        \State $\delta \gets 0$
      \Else
        \State $\delta \gets \delta - T^w_{r,s}$
        \State $T^w_{r,s} \gets 0$
      \EndIf
    \EndIf
  
    \State $T^f_{r,s} \gets T^f_{r,s} + \delta$ 

  \EndWhile

  \State $\bar{s}[r] \gets s_f$ , $\Delta t[r] \gets \delta$

  \If{$\Delta t[r] == 0$} $\mathcal{R}_d$.\texttt{remove}($r$)
  \EndIf

  \end{algorithmic}
  \end{algorithm}

Algorithm~\ref{alg:updateTimeVars} receives as input a robot $r$ and updates its schedule, propagating its delay up to a given slot $s_f$. The algorithm also receives three global variables that may have to be updated at any time: $\bar{s}$ is a vector that indicates the latest slot updated so far for each robot's schedule, $\Delta t$ is a vector with the current delay for each robot's schedule, and $\mathcal{R}_d$ is a list that contains all the robots with delayed plans at each moment. If the robot plan still has slots to be updated (line 1), the algorithm iterates over those slots until the target slot $s_f$ is reached or there is no remaining delay to consume (lines 3--12). If a slot has waiting time assigned, this is reduced to accommodate the remaining delay either completely (lines 6--8) or partially (lines 9--11). The final time for the task allocated to the slot is correspondingly delayed (line 12). Before finishing, the latest slot processed for the robot schedule and its current delay are updated (line 13). If the robot has managed to accommodate all its delay within its waiting periods, its plan is no longer delayed and it is removed from the list $\mathcal{R}_d$ (line 14).

\begin{algorithm}
  \footnotesize
  \caption{{\sc {UpdateSynchTask\,($\mathcal{A}^+,\bar{s},\Delta t,\mathcal{R}_d$)}}}\label{alg:updateSynchTask}
  \begin{algorithmic}[1]
  
  \For{$(r,s)$ in $\mathcal{A}^+$}

    \State $\delta \gets \Delta t[r]$ 

    \If{$r \in \mathcal{R}_d$}

      \If{$T^w_{r,s} > \delta$} \Comment{Accommodate delay fully}
        \State $T^w_{r,s} \gets T^w_{r,s} - \delta$
        \State $\delta \gets 0$
        
      \Else \Comment{Push forward task end}
        \State $\delta \gets \delta - T^w_{r,s}$
        \State $T^w_{r,s} \gets 0$
        \State $T^f_{r,s} \gets T^f_{r,s} + \delta$ 
  
      \EndIf
    \EndIf

    \State $\bar{s}[r] \gets s$ , $\Delta t[r] \gets \delta$
    
  \EndFor

  \State $(r_{max},s_{max}) \gets \argmax_{(r,s) \in \mathcal{A}^+} \; \Delta t[r]$
  \State $\delta_{max} \gets \Delta t[r_{max}]$

  \If{$\delta_{max} == 0$} \Comment{All robots synchronized}

    \For{$(r,s)$ in $\mathcal{A}^+$} $\mathcal{R}_d$.\texttt{remove}($r$) 
    \EndFor
  \Else

    \For{$(r,s)$ in $\mathcal{A}^+$}  \Comment{Synch all robots with latest}

      \State $\delta \gets \Delta t[r]$

      \State $T^w_{r,s} \gets T^w_{r,s} + \delta_{max} - \delta$
      \State $T^f_{r,s} \gets T^f_{r,s} + \delta_{max} - \delta$
      \State $\Delta t[r] \gets \delta_{max}$
      \State $\mathcal{R}_d$.\texttt{add}($r$)
    \EndFor
  \EndIf

  \end{algorithmic}
\end{algorithm}

Algorithm~\ref{alg:updateSynchTask} updates the schedules of a set of robots involved in the first type of coordination point; i.e., a coalition that has to start a multi-robot task synchronously. The algorithm receives $\mathcal{A}^+$ as input, which is a list of pairs $(r,s)$ indicating 1) the identifiers of the robots involved in the synchronization, and 2) the corresponding slots within their schedules in which the coordinated task occurs. All schedules are assumed to be updated (delay propagation) up to their previous slot $s-1$, so only the time variables corresponding to the slots where the coordination takes place need to be updated. First, all delayed robots in the coalition are checked (lines 1--11) to accommodate their delays within their waiting period either completely (lines 4--6) or partially (lines 7--10). Then the robot with maximum remaining delay in the coalition is computed (lines 12--13). If all delays have been fully accommodated, the whole coalition is already synchronized (lines 14--15). Otherwise, there are still robots with remaining delay that will push the end of the multi-robot task forward, and since all must synchronize their finish times, the robot with maximum delay is taken as reference. All the other robots increase their waiting time (and their task finish time) to match the \emph{latest} robot (lines 16--22).  

\begin{algorithm}
  \footnotesize
  \caption{{\sc {UpdateRelayTask\,($^-\!\!\mathcal{A},\mathcal{A}^+,\bar{s},\Delta t,\mathcal{R}_d$)}}}\label{alg:updateRelayTask}
  \begin{algorithmic}[1]

  \For{$(r,s)$ in $\mathcal{A}^+$}

    \State $\delta \gets \Delta t[r]$ 

    \If{$r \in \mathcal{R}_d$}

      \If{$T^w_{r,s} > \delta$} \Comment{Accommodate delay fully}
        \State $T^w_{r,s} \gets T^w_{r,s} - \delta$
        \State $\delta \gets 0$
        
      \Else \Comment{Push forward task end}
        \State $\delta \gets \delta - T^w_{r,s}$
        \State $T^w_{r,s} \gets 0$
        \State $T^f_{r,s} \gets T^f_{r,s} + \delta$ 
        
      \EndIf
    \EndIf

    \State $\bar{s}[r] \gets s$ , $\Delta t[r] \gets \delta$
    
  \EndFor

  \State $(r_{max},s_{max}) \gets \argmax_{(r,s) \in ^-\!\!\mathcal{A},\mathcal{A}^+} \; \Delta t[r]$
  \State $\delta_{max} \gets \Delta t[r_{max}]$

  \If{$\delta_{max} == 0$}

    \For{$(r,s)$ in $\mathcal{A}^+$}  \Comment{All robots synchronized}
      \State $\mathcal{R}_d$.\texttt{remove}($r$)
    \EndFor

  \Else

    \For{$(r,s)$ in $^-\!\!\mathcal{A}, \mathcal{A}^+$} \Comment{Synch all robots with latest}
      \State $\delta \gets \Delta t[r]$
      \State $T^w_{r,s} \gets T^w_{r,s} + \delta_{max} - \delta$
      \State $T^f_{r,s} \gets T^f_{r,s} + \delta_{max} - \delta$
      \State $\Delta t[r] \gets \delta_{max}$
      \State $\mathcal{R}_d$.\texttt{add}($r$)
    \EndFor

  \EndIf

  \end{algorithmic}
\end{algorithm}

Algorithm~\ref{alg:updateRelayTask} updates the schedules of a set of robots involved in the second type of coordination point, a coalition (or single robot) that is relayed by another coalition (or single robot). This time the algorithm receives as input two lists, $^-\!\!\mathcal{A}$ and $\mathcal{A}^+$: the former contains pairs $(r,s)$ with the robots being relayed and the slots that are relayed in their corresponding schedules; the latter contains pairs $(r,s)$ with the relaying robots and their relaying slots. 
All schedules are assumed to be updated (delay propagation) up to the slot previous to the relay; note that this is $s$ for robots in $^-\!\!\mathcal{A}$ and $s-1$ for robots in $\mathcal{A}^+$. 
The algorithm starts accommodating all possible delay for the robots in $\mathcal{A}^+$ within their waiting period (lines 1--11), as was done in Algorithm~\ref{alg:updateSynchTask}. Then the robot with maximum remaining delay from the relayed and relaying sets is computed (lines 12--13). In the case of a relay, both the relayed and relaying robots must be time coordinated, and the \emph{latest} robot will determine the new relay time instant. If all delays become fully accommodated, the coordination point is already solved (lines 14--16). Otherwise, the relay instant is put off according to the maximum delay, and all robots involved increase their waiting time (and their task finish time) to match the \emph{latest} robot (lines 18--23). An example of this procedure is depicted in Figure~\ref{fig:updateRelayTask_example}. 

\begin{figure}[tb!]
  \centering
  \includegraphics[width=.7\columnwidth]{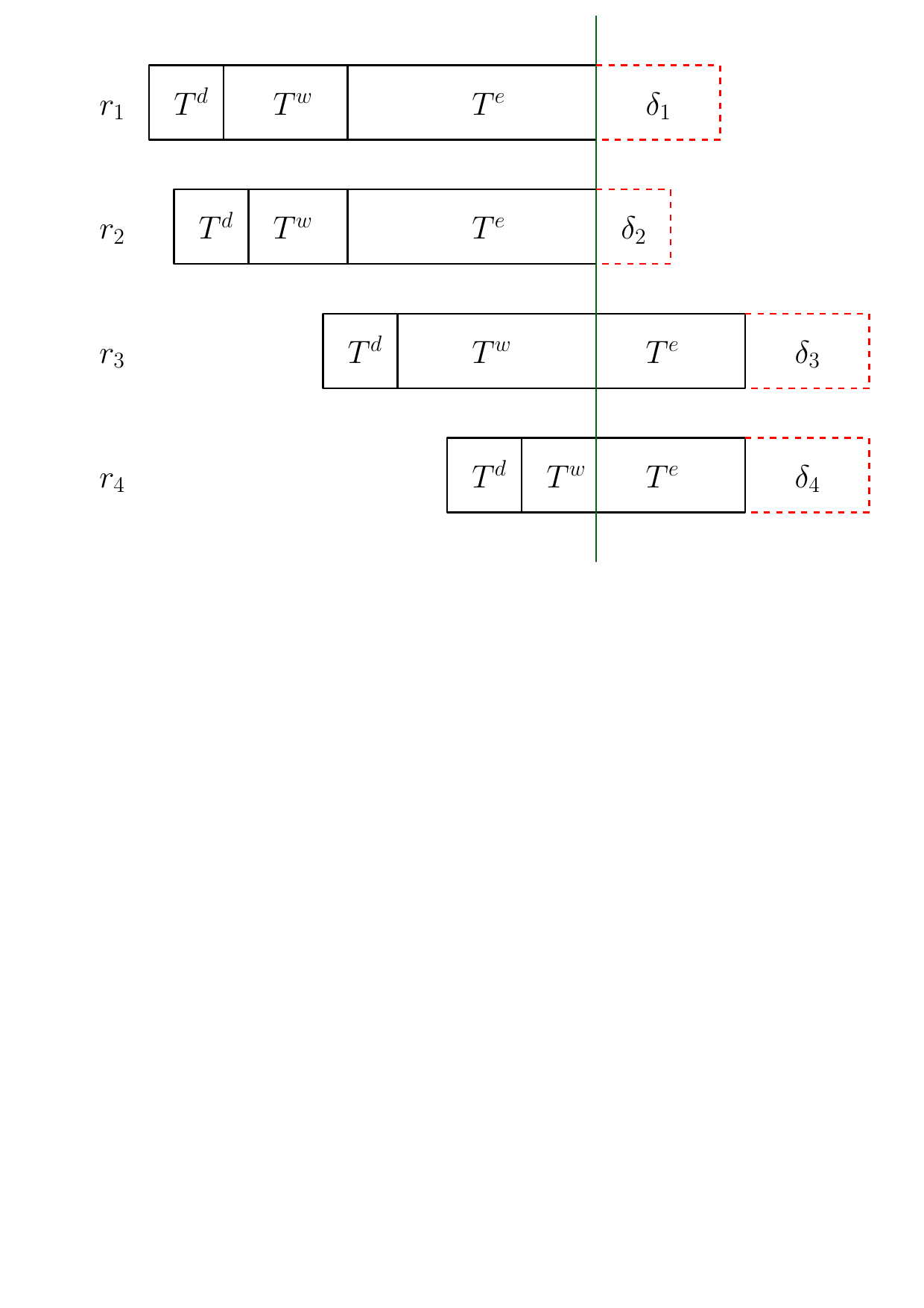}\\
  \includegraphics[width=.7\columnwidth]{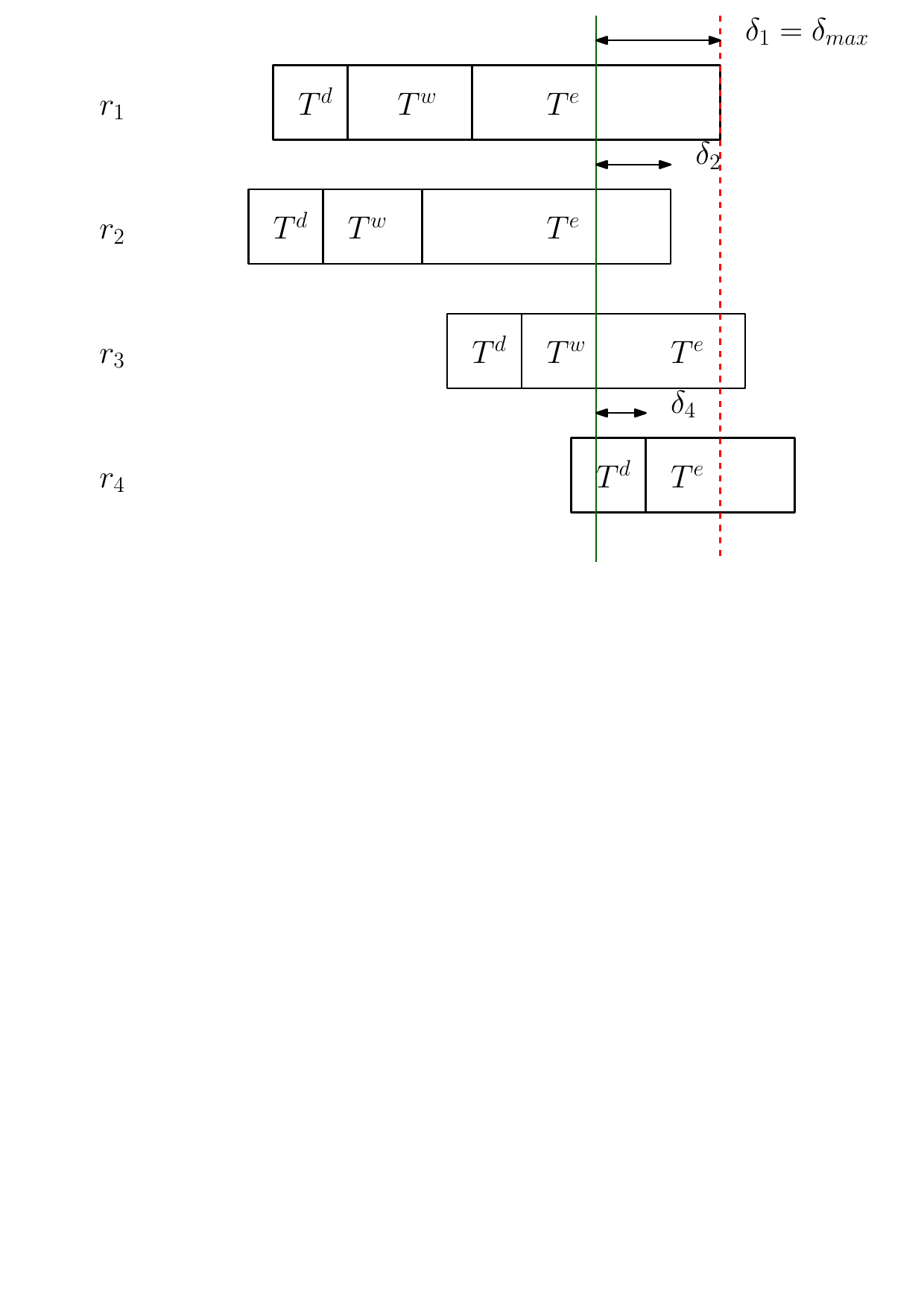}\\
  \includegraphics[width=.7\columnwidth]{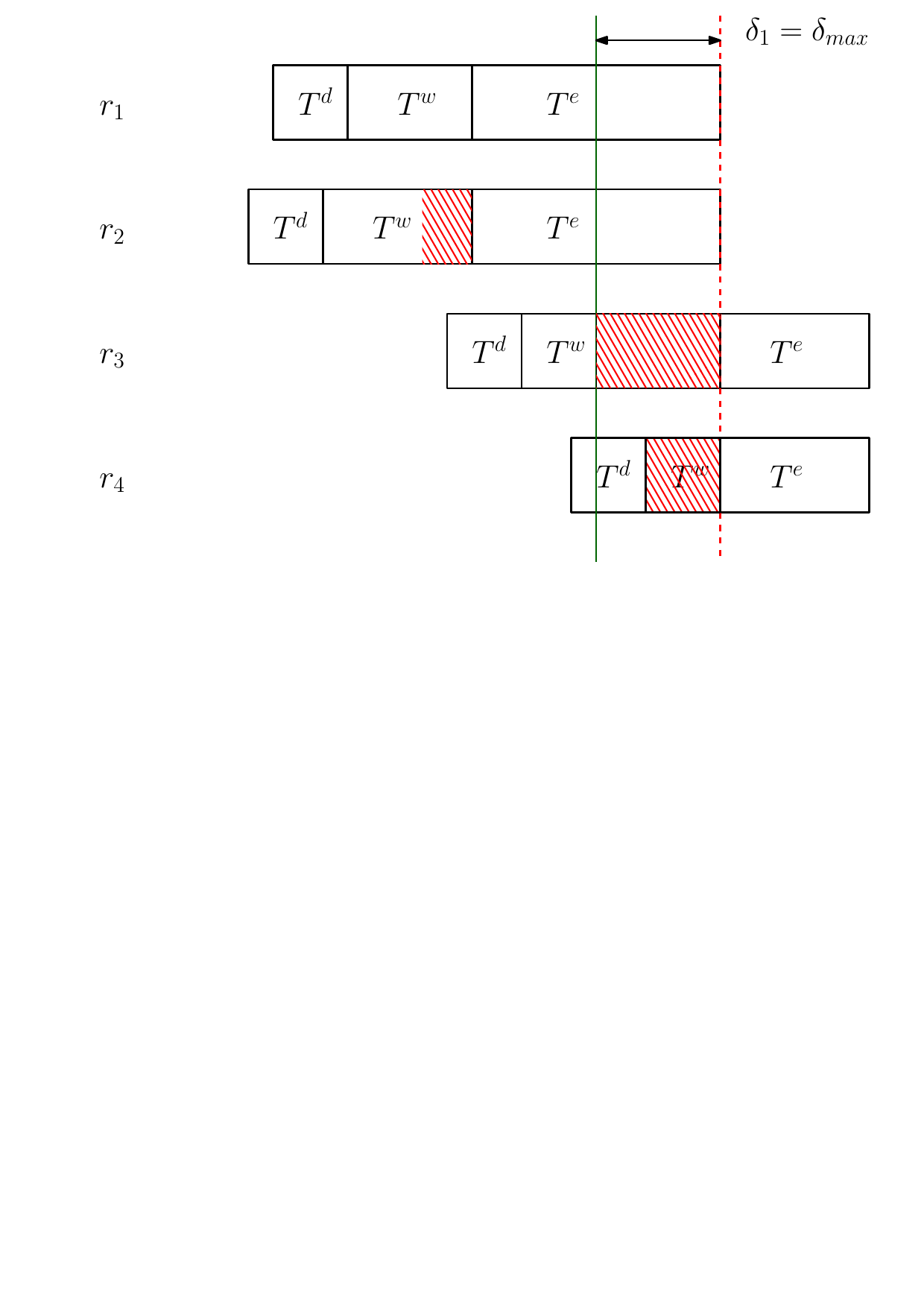}
  
  \caption{Example of Algorithm~\ref{alg:updateRelayTask} repairing the schedules of a relay point. Top view: originally, robots $r_1$ and $r_2$ ($^-\!\!\mathcal{A}$) execute a multi-robot task which is relayed by robots $r_3$ and $r_4$ ($\mathcal{A}^+$) at the time line in green. Dashed red squares represent the delay for each robot schedule. Middle view: first, the relays of robots $r_3$ and $r_4$ are partially accommodated by reducing waiting times; relays of robots in $^-\!\!\mathcal{A}$ ($r_1$ and $r_2$) were already propagated before Algorithm~\ref{alg:updateRelayTask}. The maximum delay ($r_1$) determines the new relay time instant (dashed red vertical line). Bottom view: robots synchronize with the new relay instant by extending their waiting periods (red patterns).}
  \label{fig:updateRelayTask_example}
\end{figure}

\begin{algorithm}
  \footnotesize
  \caption{{\sc {RepairPlans\,($t_0,r',\delta',\bar{s}$)}}}\label{alg:repairPlans}
  \begin{algorithmic}[1]
  
  \If{$\delta' <= 0$}

    \State $s' \gets \bar{s}[r']$ , $T^w_{r',s'+1} \gets T^w_{r',s'+1} + |\delta'|$
    \State \Return \texttt{True}

  \Else

    \State $\mathcal{R}_d \gets \emptyset$ , $\mathcal{R}_d$.\texttt{add}($r'$)
    \For{$r$ in $\mathcal{R}$}
      \State $\Delta t[r] \gets 0$ 
    \EndFor
    \State $\Delta t[r'] \gets \delta'$ 
    
    \State $ \mathcal{C} \gets$ \texttt{GetOrderedCoordinationTasks}($t_0$)

    \For{$(^-\!\!\mathcal{A},\mathcal{A}^+)$ in $\mathcal{C}$}

      \For{$(r,s)$ in $\mathcal{A}^+$}
        \State \texttt{UpdateTimeVars}($r,s-1,\bar{s},\Delta t,\mathcal{R}_d$)
      \EndFor

      \If{$^-\!\!\mathcal{A} \neq \emptyset$}
      
        \For{$(r,s)$ in $^-\!\!\mathcal{A}$}
          \State \texttt{UpdateTimeVars}($r,s,\bar{s},\Delta t,\mathcal{R}_d$)
        \EndFor
        
        \State \texttt{UpdateRelayTask}($^-\!\!\mathcal{A},\mathcal{A}^+,\bar{s},\Delta t,\mathcal{R}_d$)
      \Else

        \State \texttt{UpdateSynchTask}($\mathcal{A}^+,\bar{s},\Delta t,\mathcal{R}_d$)

      \EndIf
      \If{$\mathcal{R}_d == \emptyset$} \textbf{break}
      \EndIf
    \EndFor

    \For{$r$ in $\mathcal{R}_d$}
      \State \texttt{UpdateTimeVars}($r,|\mathcal{S}|,\bar{s},\Delta t,\mathcal{R}_d$)
    \EndFor

    \State $valid \gets$ \texttt{True}

    \For{$r$ in $\mathcal{R}$}
      \State $valid \gets valid \; \cap$ \texttt{checkPlanBattery}($r$)
    \EndFor

    \State \Return $valid$
  \EndIf

  \end{algorithmic}
\end{algorithm}

Algorithm~\ref{alg:repairPlans} attempts to repair a multi-robot plan given that one of the robots $r'$ has finished its assigned task at $t_0$ and has a delay $\delta'$ before its next scheduled task. The algorithm receives as input the vector $\bar{s}$, which indicates the current running slot for each robot's schedule.
If the delay is negative, the robot is ahead of its schedule and a waiting period for its next slot is added for synchronization (lines 1--4). Otherwise, after initializing variables (lines 5--8), a time ordered list $\mathcal{C}$ with all the coordination points in the multi-robot schedule starting at $t_0$ is computed (line 9). Each item in $\mathcal{C}$ is a pair $(^-\!\!\mathcal{A},\mathcal{A}^+)$ with all the information on the robots involved in the corresponding coordination point. The algorithm synchronizes robots' schedules for all coordination points until there are no more delayed robots (lines 10--19). Depending on the type of coordination point, Algorithm~\ref{alg:updateSynchTask} or Algorithm~\ref{alg:updateRelayTask} is used. Before each coordination point is synchronized, Algorithm~\ref{alg:updateTimeVars} propagates delays in the schedules of the involved robots up to the slot preceding the coordination point. Given the definitions of $\mathcal{A}^+$ and $^-\!\!\mathcal{A}$, this will be up to $s_f=s-1$ or $s_f=s$, respectively. Finally, after all coordination points are swept, all robots' schedules are delay propagated up to their last slot ($s_f=|\mathcal{S}|$) and the resulting plans are checked for battery compliance (lines 20--24). If the repaired plan does not comply with battery constraints, the algorithms reports a failure.



\section{Experimental Results}
\label{sec:experimental_results}

This section presents experimental results that evaluate our methods. 
Given the use case in Section~\ref{sec:use_case} and the experimental setup in Section~\ref{sec:experimental_setup}, we present numerical experiments (Sections~\ref{sec:small_scenarios_exp} and~\ref{sec:scalability_exp}) with a twofold objective: (i) evaluate optimal solutions and demonstrate the potential of our \ac{MILP} formulation to solve more complex scenarios, due to our higher degree of freedom in terms of task decomposability and coalition size flexibility; and (ii) assess our heuristic planner performance and test its scalability for larger scenarios. 
Furthermore, we present a realistic simulation to demonstrate the feasibility of our approach in real applications and show the advantages of our replanning framework in dynamic scenarios (Section~\ref{sec:replanning_exp}).

\subsection{Use case description}
\label{sec:use_case}

We define a use case taking as inspiration our running example in Section~\ref{sec:problem_description}: a team of \acp{UAV} that provides support to human workers during inspection and maintenance operations in a solar energy plant. We used an actual photovoltaic facility (see Figure~\ref{fig:evora}) of size approximately $200\times300$~m located in Evora, Portugal. 
The~\acp{UAV} can help with three types of tasks: 1) \emph{inspection}, in which they capture visual or thermal images of a given remote area with solar panels; 2) \emph{monitoring}, in which they provide the supervising team with a view of a human worker operating on the plant, in order to assess his/her safety; and 3) \emph{delivery}, in which they transport and hand out items such as tools or parts to a human worker. According to our categorization in Section~\ref{sec:problem_description}, inspection tasks are fragmentable (the target area could be divided), monitoring tasks are relayable (a continuous videostream of the worker is required in risky operations), and delivery tasks are non-decomposable. 
The team is heterogeneous: depending on its payload, each \ac{UAV} can conduct certain tasks. Specific cameras enable inspection and monitoring tasks, while a load transportation system enables delivery tasks.
The maximum flight time for each vehicle is $B_r^{max}=20$ minutes and its traveling speed $v_r=5$~m/s. The recharging station can be used by multiple vehicles simultaneously and is located at the central point of the plant.
Each recharge operation takes $T^e_{t_R} = 5$ minutes.

\begin{figure}[tb]
  \centering
  \includegraphics[width=0.65\columnwidth]{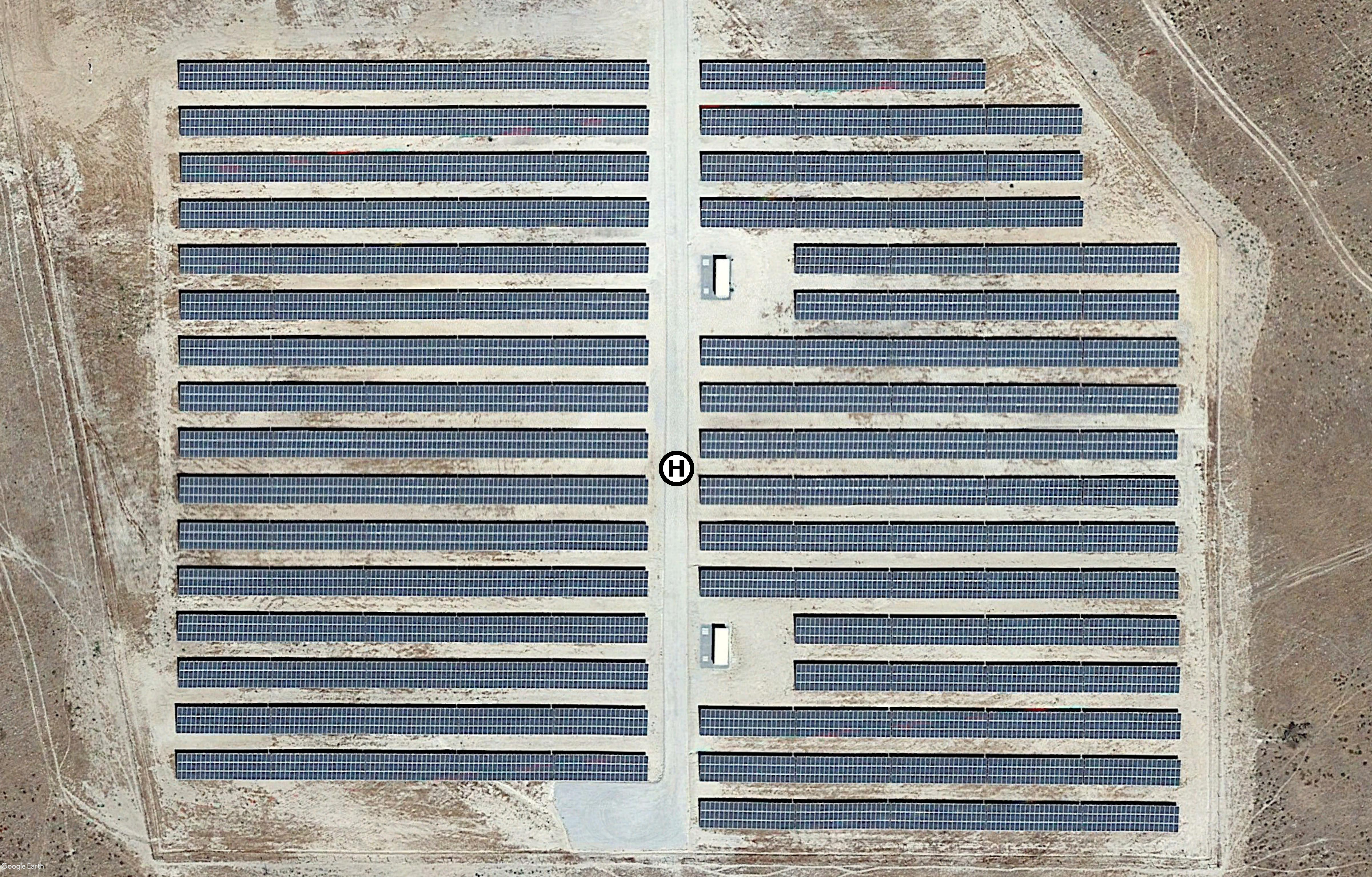}
  \caption{Photovoltaic solar plant in Evora (Portugal). A single recharging station is located in the middle (circle mark).}
  \label{fig:evora}
\end{figure}

\subsection{Experimental setup}
\label{sec:experimental_setup}

We implemented our code in MATLAB R2022b\footnote{Code at~\url{https://github.com/multirobot-use/mrta_heuristic_planner}.} using Gurobi to compute exact optimal solutions.
All experiments were run with an Intel 8-core i7-7700 CPU@3.60GHz with 15.6GB RAM.
The methods were evaluated over a set of random scenarios based on the use case in Section~\ref{sec:use_case}. \acp{UAV} start at random positions within the solar plant, with an initial consumed battery time $B_{r,0}$ uniformly sampled from $0$, $0.25$, or $0.5$ of the total flight time $B_r^{\maxs}$.
Tasks are also placed at random positions with a deadline $T_t^{\maxs}=100$~minutes and an execution time $T_t^e$ uniformly sampled from $0.35$, $1.25$, or $2.5$ of the total flight time (this duration is set to $0.35 \cdot B^{\maxs}_r$ for tasks that cannot be fragmented, so they do not last longer than the total flight time).
Each task has a decomposability uniformly sampled from \{\emph{non-decomposable}, \emph{fragmentable}, \emph{relayable}\} and a coalition size flexibility sampled from \{\emph{fixed}, \emph{variable}, \emph{unspecified}\}. The specified coalition size $N_t$ is sampled from 1 to the maximum number of compatible~\acp{UAV}. 
The hardware capabilities for each \ac{UAV} are uniformly sampled from three different types. Given the set of sampled \ac{UAV} types in the team, each task is then randomly set as compatible or incompatible (with a $0.5$ probability) for each of these hardware types, ensuring compatibility with at least one type.

In order to evaluate the multi-robot plans for each method in a given set of scenarios, we defined the following metrics:

\begin{enumerate}
    \item \emph{Success Rate (SR)}: Percentage of scenarios for which a solution is found.
  	\item \emph{Recharge Rate (RR)}: Percentage of the solved scenarios that have at least one recharge.
 	  \item \emph{Number of Recharges (NR)}: Total number of recharges in a solution.
 	  \item \emph{Objective function value ($f$)}: Value of the function~\eqref{eq:costFunction} for a solution. This is a cost to be minimized.
 	  \item \emph{Makespan ($Z$)}: Makespan, time by which the latest robot finishes its plan.
 	  \item \emph{Waiting Time Rate (WTR)}: Percentage of time each robot is waiting out of its whole plan duration, averaged over all robots in the plan. 
    \item \emph{Coalition Size Deviation (CSD)}: Relative coalition size deviation for multi-robot tasks $V_t/N_t$, averaged for all tasks in the plan with a \emph{variable} coalition size. 
 	  \item \emph{Consumed Battery Time (CBT)}: Average battery time consumed per robot in a plan, taking into account that the execution and waiting time during recharge tasks do not consume battery time. 
 	  \item \emph{Workload Distribution (WD)}: Percentage of time each robot's plan lasts relative to the whole mission duration (makespan), averaged over all robots in the plan. If the work were equally distributed, all robots would finish simultaneously at the makespan, and this metric would be 100\%. Therefore, the higher its value the better. 
 	  \item \emph{Computation Time (CT)}: Time needed to compute a solution. 
\end{enumerate}

\subsection{Small-scale scenarios}
\label{sec:small_scenarios_exp}

\begin{figure*}[tbh!]
  \centering
  
    \includegraphics[width=.25\linewidth]{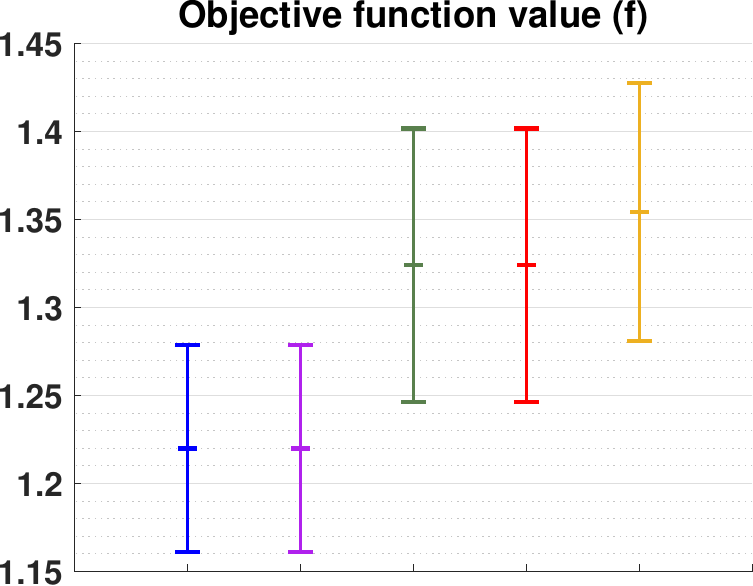}
    \includegraphics[width=.25\linewidth]{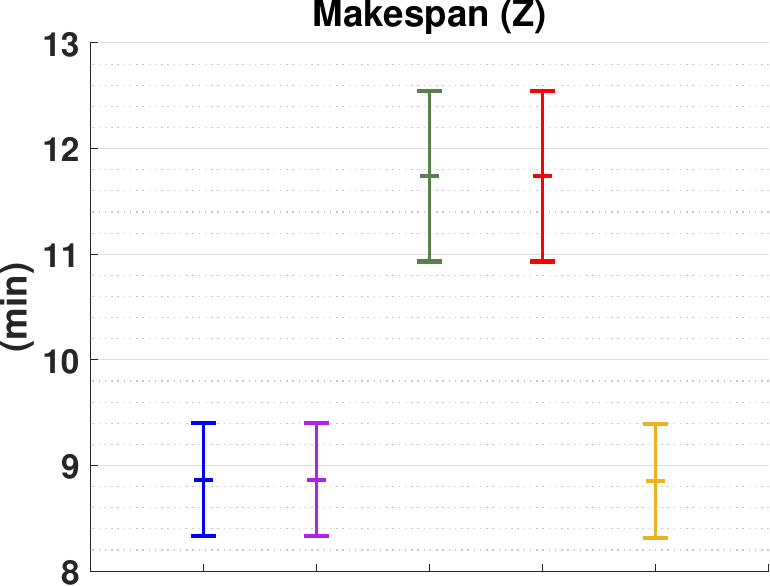}
    \includegraphics[width=.25\linewidth]{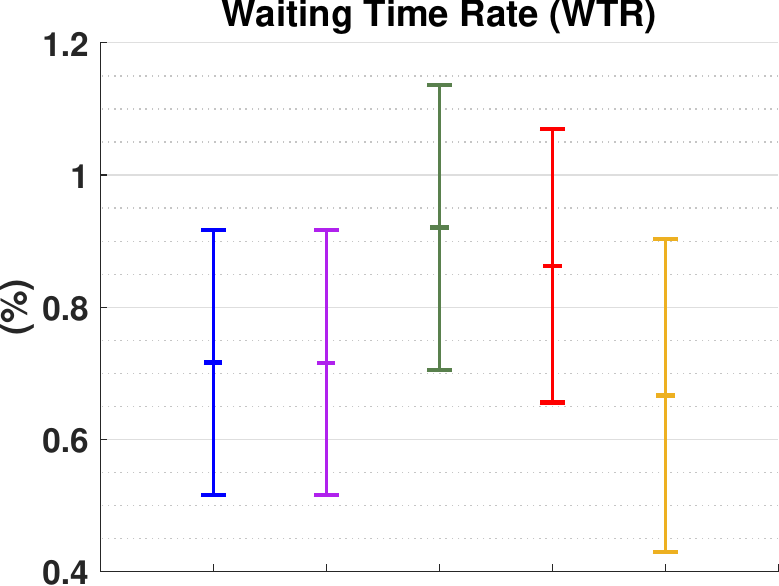}\\
    \includegraphics[width=.25\linewidth]{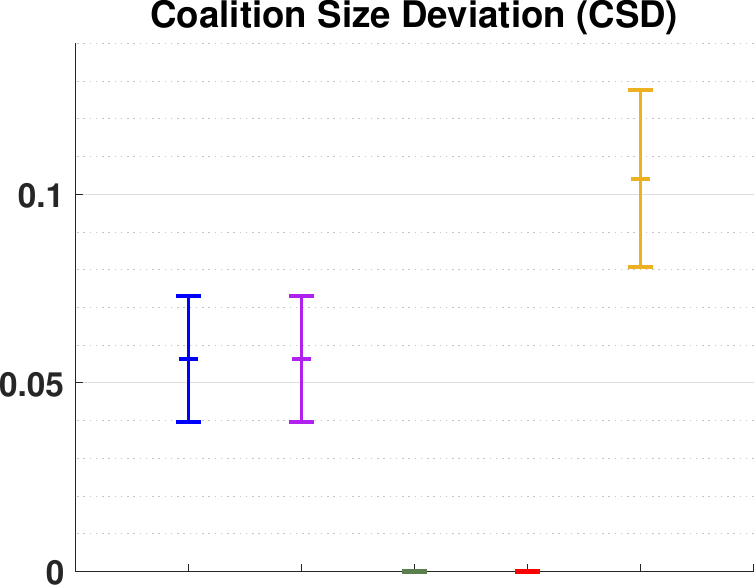}
    \includegraphics[width=.25\linewidth]{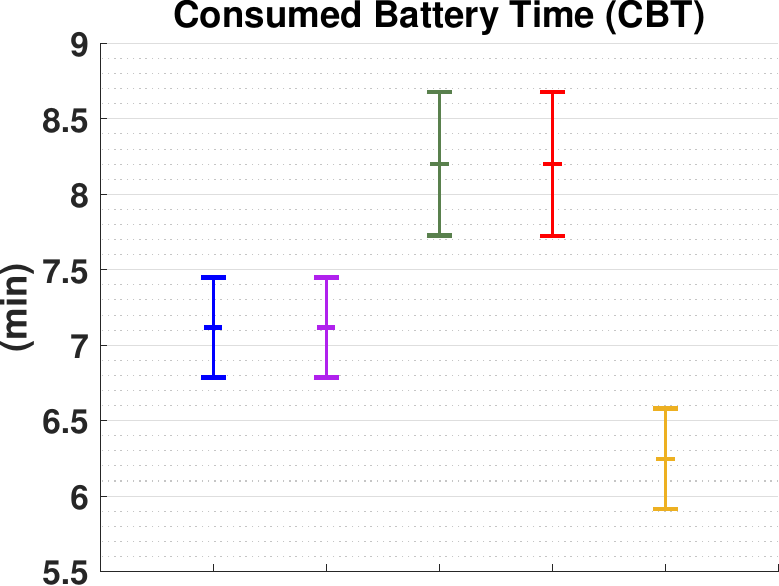}
    \includegraphics[width=.25\linewidth]{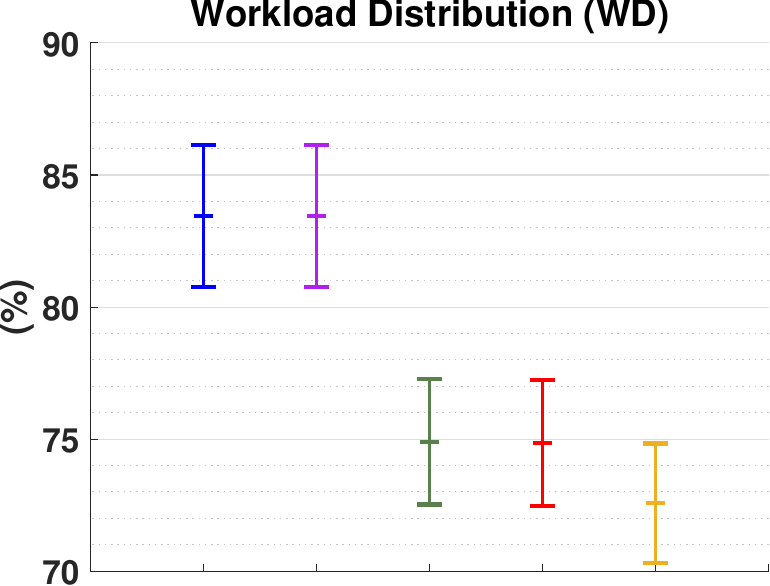}\\
    \includegraphics[width=.6\linewidth]{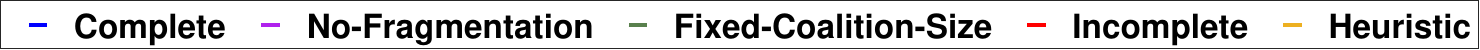}
    
  \caption{Resulting metrics for the small-scale scenarios. Mean and standard deviation are shown for the scenarios that are solvable with the five approaches being compared.}
  \label{fig:small_scale_metrics}
\end{figure*}

In this section, we show numerical experiments to demonstrate the potential of our \ac{MILP} formulation. We prove that by including features such as task decomposition and coalition size flexibility, our approach finds feasible solutions for scenarios that would be unsolvable otherwise. We evaluate the quality of optimal solutions when certain features are removed from the formulation, in order to analyze their influence on the results. Lastly, we compare the performance of our heuristic solver (Section~\ref{sec:planner}) against exact optimal solutions.  

We generated 100 random scenarios following the methodology explained in Section~\ref{sec:experimental_setup}. Since the number of variables in our~\ac{MILP} formulation increases significantly with the size of the multi-robot team and the number of tasks in the mission, numerical solvers such as Gurobi do not scale with large scenarios. Therefore, we limited this first experiment to small-scale scenarios where Gurobi was able to find solutions without running out of memory. In particular, the scenarios created had missions with 2 tasks (plus the required recharge tasks) and 3 available~\acp{UAV}. Due to the size of the scenarios, we did not limit the hardware capabilities of the robots, so all~\acp{UAV} were able to execute any of the tasks. Although small, these scenarios exhibit enough complexity to yield plans with a rich variety of features such as recharges, task fragmentation/relays, and multi-robot synchronization. Therefore we believe they can be useful for drawing some conclusions about the characteristics of optimal solutions.\footnote{Video with illustrative examples at: \url{https://youtu.be/a6zXwTZKFR4}} 

We compared different approaches to solve the scenarios: 1) \emph{Complete}: optimal plans for our complete \ac{MILP} formulation~\eqref{eq:MILP}; 2) \emph{No-Fragmentation}: optimal plans for our~\ac{MILP} formulation considering all tasks non-decomposable; 3) \emph{Fixed-Coalition-Size}: optimal plans for our~\ac{MILP} formulation considering all tasks with fixed coalition size (undefined tasks are always assigned to a single robot in this variant); 4) \emph{Incomplete}: optimal plans for our~\ac{MILP} formulation combining Fixed-Coalition-Size and No-Fragmentation variants; 5) \emph{Heuristic}: plans obtained with our heuristic solver.
Figure~\ref{fig:small_scale_metrics} shows the results. Makespan, WTR, and CSD are explicitly minimized in our \ac{MILP} formulation through the objective function $f$. The Complete variant outperforms the Incomplete one across all these metrics except for CSD, which is inherently zero in variants with fixed coalition sizes. The Complete variant also performs better than the Incomplete variant for metrics not explicitly optimized, namely CBT and WD. This demonstrates that leveraging task fragmentation and coalition size flexibility improves plan quality for solvable scenarios.
Similar results are observed for the Complete and No-Fragmentation variants, and for the Fixed-Coalition-Size and Incomplete variants, respectively. This might suggest that fragmentation offers no advantage. However, the SR was $100\%$ for the Complete variant, compared to $37\%$ for No-Fragmentation, $92\%$ for Fixed-Coalition-Size, $36\%$ for Incomplete, and $100\%$ for Heuristic. This indicates that fragmentation significantly increases the number of solvable scenarios. The high SR for Fixed-Coalition-Size stems from our generation of scenarios predominantly solvable with fixed coalition sizes. Note that the metrics in Figure~\ref{fig:small_scale_metrics} are calculated only for scenarios solvable by all approaches, meaning fragmentation was not strictly required. Nonetheless, fragmentation may still be beneficial even when not mandatory, as it allows task execution at different times, potentially improving battery utilization and reducing recharges. Such situations did not arise in these small-scale scenarios. Regarding recharges, the Complete and No-Fragmentation variants had an RR of $5.56\%$ with an average NR of $0.08$ per scenario; Fixed-Coalition-Size and Incomplete had RR of $16.67\%$ and average NR of $0.28$; and Heuristic had RR of $5.56\%$ and average NR of $0.06$. This suggests that the Complete version also achieves more efficient solutions than the Incomplete version by requiring fewer recharges, and the Heuristic solver performs similarly to the Complete version in terms of recharges.

Overall, heuristic solutions show worse performance in terms of objective function value $f$, as expected. Nonetheless, the heuristic solver found the optimal solution (i.e., a value for $f$ equal to the one in the Complete solution) in a $25\%$ of the scenarios. In addition, it is important to note that the Heuristic approach performed as the Complete in makespan, which is the most critical optimization criterion, and even outperformed the others in CBT and WTR.

\subsection{Scalability test}
\label{sec:scalability_exp}

\begin{table}[tbh!]
	\centering
  \caption{\ac{MILP} formulation size as the number of robots/tasks ($n/m$) increases. Values shown represent the worst case from 100 randomly generated scenarios. Scenarios below the horizontal line could not be solved with Gurobi.}
	\begin{adjustbox}{width=\columnwidth}
    \begin{tabular}{c|c c c c c c}
		  \hline
          \makecell{Scenario\\size} & 
          \makecell{Basic\\variables } & 
          \makecell{Time\\related\\variables } & 
          \makecell{Time\\coordination\\variables } & 
          \makecell{Linearization\\variables (\%) } & 
          \makecell{Integer / Real\\variables (\%) } & 
          \makecell{Total\\variables } \\
          \hline
          n=1 / m=1   & 34    & 74     & 77        & 61.08 & 56.76 / 43.24 & 185       \\
          n=1 / m=2   & 60    & 212    & 534       & 69.98 & 46.4 / 53.6   & 806       \\
          n=2 / m=1   & 66    & 292    & 1,040      & 69.24 & 39.41 / 60.59 & 1,398      \\
          n=2 / m=2   & 107   & 564    & 3,644      & 71.47 & 36.41 / 63.59 & 4,315      \\
          n=2 / m=3   & 176   & 1,156   & 12,208     & 72.08 & 34 / 66       & 13,540     \\
          n=2 / m=5   & 292   & 2,404   & 31704     & 72.45 & 33.34 / 66.66 & 34,400     \\
          n=3 / m=2   & 254   & 1,791   & 36,579     & 71.6  & 31.18 / 68.82 & 38,624     \\
          n=3 / m=3   & 335   & 2,598   & 61,468     & 71.8  & 31.06 / 68.94 & 64,401     \\
          \hline
          n=2 / m=10  & 771   & 10,144  & 189,896    & 72.53 & 32.17 / 67.83 & 200,811    \\
          n=5 / m=2   & 538   & 3,490   & 201,966    & 71.42 & 29.45 / 70.55 & 205,994    \\
          n=5 / m=5   & 1,824  & 16,670  & 2,102,357   & 71.54 & 29.08 / 70.92 & 2,120,851   \\
          n=10 / m=2  & 2,093  & 14,520  & 3,501,506   & 71.43 & 28.78 / 71.22 & 3,518,119   \\
          n=10 / m=10 & 29,851 & 479,720 & 423,639,104 & 71.45 & 28.65 / 71.35 & 424,148,675 \\
          \hline
          \hline
          \makecell{Scenario\\size} & 
          \makecell{Basic\\constraints } & 
          \makecell{Time\\related\\constraints } & 
          \makecell{Time\\coordination\\constraints } & 
          \makecell{Linearization\\constraints (\%) } & 
          \makecell{ - } & 
          \makecell{Total\\constraints} \\
          \hline
          n=1 / m=1   & 38    & 218     & 285        & 83.55 & - & 541 \\
          n=1 / m=2   & 81    & 662     & 2,030       & 87.41 & - & 2,773 \\
          n=2 / m=1   & 102   & 868     & 3,957       & 87.36 & - & 4,927 \\
          n=2 / m=2   & 173   & 1,764    & 13,970      & 88.41 & - & 15,907 \\
          n=2 / m=3   & 304   & 3,700    & 46,939      & 88.79 & - & 50,943 \\
          n=2 / m=5   & 512   & 7,804    & 122,051     & 88.99 & - & 130,367 \\
          n=3 / m=2   & 490   & 5,616    & 140,810     & 88.76 & - & 146,916 \\
          n=3 / m=3   & 637   & 8,322    & 236,749     & 88.86 & - & 245,708 \\
          \hline
          n=2 / m=10  & 1,417  & 32,764   & 731,826     & 89.11 & - & 766,007 \\
          n=5 / m=2   & 1,115  & 11,050   & 778,346     & 88.81 & - & 790,511 \\
          n=5 / m=5   & 3,752  & 54,890   & 8,107,103    & 88.89 & - & 8,165,745 \\
          n=10 / m=2  & 4,545  & 46,020   & 13,503,026   & 88.87 & - & 13,553,591 \\
          n=10 / m=10 & 61,697 & 1,549,820 & 1,634,005,986 & 88.89 & - & 1,635,617,503 \\
          \hline
    \end{tabular}
  \end{adjustbox}
  \label{tab:milp_scalability}
\end{table}

The average computation times for the small-scale scenarios were $1,450.656$~s, $1.525$~s, and $0.017$~s, for the Complete, Incomplete and Heuristic approaches, respectively. This demonstrates the lack of scalability for optimally solving our complete \ac{MILP} formulation, due to the large number of variables and constraints. Table~\ref{tab:milp_scalability} shows the number of variable and constraints, categorized by type, for a range of scenarios with increasing numbers of robots and tasks. The size of the \ac{MILP} instances grows exponentially, with a consistently significant proportion of integer variables, which are known to impact scalability. Among the problem's inherent variables and constraints, those related to time coordination are particularly numerous. This aligns with the understanding that the primary complexity stems from task fragmentation and multi-robot coordination. However, the overhead associated with linearization also contributes substantially to the problem's size. In summary, the results confirm that the MILP formulation's intractability is due to the large size of the resulting instances. Contrary to ours, sequence-based formulations use binary variables to indicate whether a task $i$ follows a task $j$ for a given shared resource, typically reducing the number of variables. However, the inherent complexity of our problem would still hinder scalability. For this reason, we opted not to pursue these alternative formulations, which are also less intuitive in our case.
Our significant overhead from linearization suggests that exploring Non-Linear Programming solvers could be a promising line of work. Nonetheless, this is beyond the scope of our current work. Given the problem's complexity, scalability challenges would likely persist even with more efficient exact solvers, reinforcing the need for heuristic solutions.

Next, we test the scalability of our heuristic planner further. For this, we generated a set of random scenarios as described in Section~\ref{sec:experimental_setup}, gradually increasing the size of the problem in terms of number of tasks and \acp{UAV}, running 100 scenarios per problem size.
We then solved each scenario with several variants of the heuristic planner:
1) \emph{Heuristic}: our heuristic planner as described in Section~\ref{sec:planner};
2) \emph{Random}: a version of our heuristic planner where the order in which tasks are allocated and the selection of robots are decided randomly;
3) \emph{Pseudo-Random}: a pseudo-random version where only the task allocation order is computed randomly, while robot selection is carried out using Algorithm~\ref{alg:RobotSelection};
and 4) \emph{Greedy}: a simpler greedy heuristic algorithm used as a baseline. It is inspired by the way that food buffets work. A predefined fixed order is used to build two queues: one with tasks (according to their execution time in descending order) and one with robots (in numerical order). Then the robots iteratively take as many tasks as they can handle before going to recharge and returning to the end of the queue, until all tasks are covered. 
Note that the four variants take care of coordination of the robot coalitions and of complying with all problem constraints, ensuring that all final solutions are valid plans. 

\begin{figure}[tb!]
  \centering
  
    \includegraphics[width=\columnwidth]{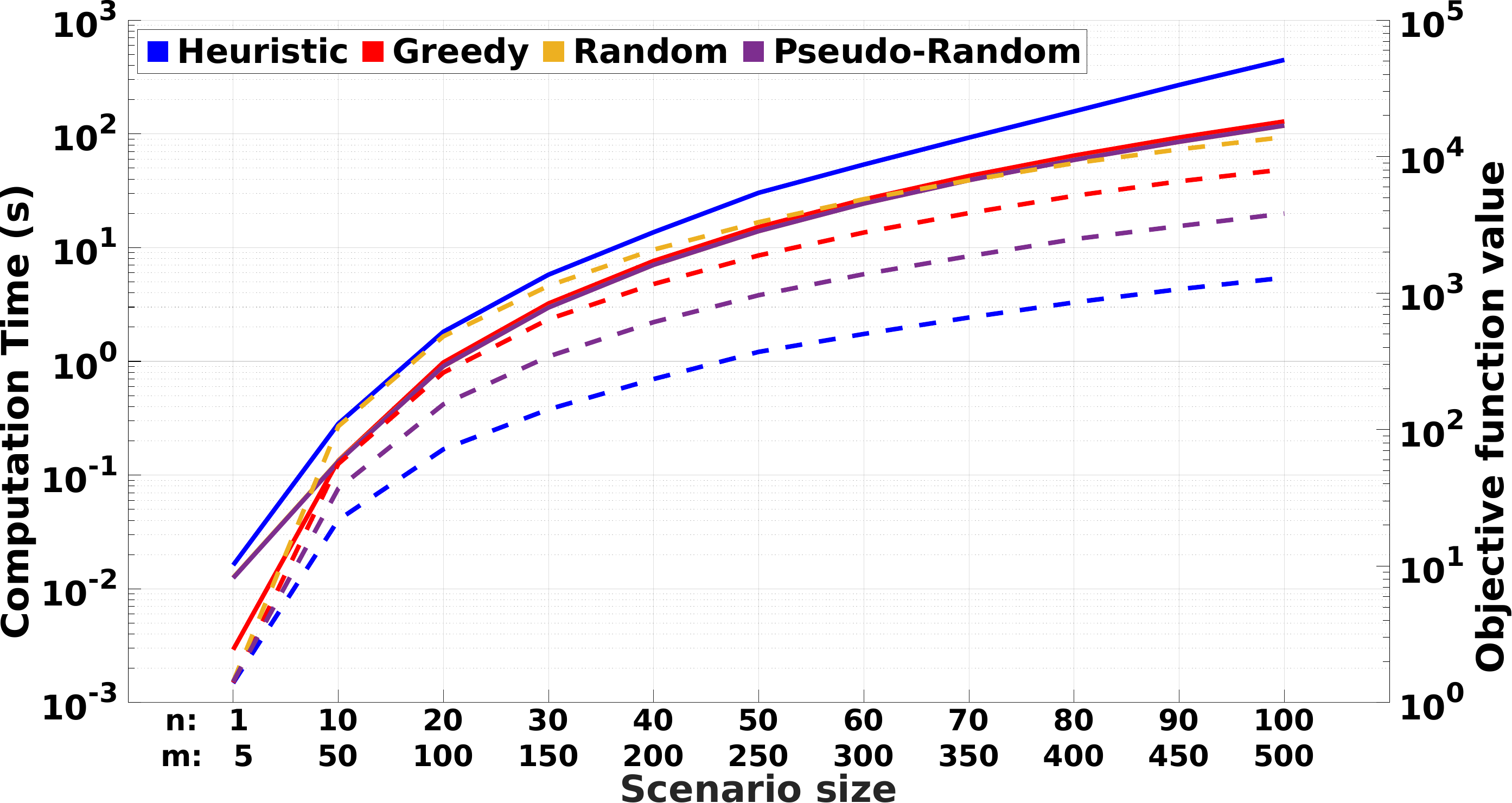}
    
  \caption{Evolution of computation time (solid lines) and the objective function value $f$ (dashed lines) for heuristic variants tested as the size of the scenarios (number of robots/tasks) increases. Average values for 100 scenarios are shown. Both metrics are shown on a logarithmic scale.}
  \label{fig:scalability_tests}
\end{figure}

Figure~\ref{fig:scalability_tests} shows the results of the scalability tests, both in terms of computation time and quality of the solutions.  
The quality of solutions is measured with the value of the objective function $f$; the lower, the better. From this perspective, our heuristic algorithm outperforms the others, with the improvement over the other variants increasing exponentially as the size of the scenario grows.
The Random version gives the worst results, followed by the Greedy version, which demonstrates the importance of applying a more \emph{intelligent} heuristic for such a highly constrained problem. 
Moreover, although the Pseudo-Random version performs better than Random and Greedy, it still produces significantly worse solutions than our Heuristic approach, which shows the importance of task allocation order. 
In terms of computation time, our Heuristic approach scales worse than the other variants, which show similar performance. Recall that the Heuristic variant recomputes the best robot coalition for each task at each iteration, and this robot selection also implies iterating over several coalitions until the best one is chosen. Therefore, the increase in computation time with the size of the scenario was expected. The other variants tested avoid some of this computation, as they simply pick tasks or robots randomly (Random and Pseudo-Random) or according to a predefined order (Greedy).  
Overall, although the computation time increase is exponential for our heuristic solver, the results show that we can handle very large scenarios, with a computational load that is reasonable for real-time planning (on the order of minutes in the worst case).  

\subsection{Plan repair and replanning}
\label{sec:replanning_exp}

In this section, we evaluate our whole mission replanning and execution framework in dynamic scenarios through a realistic simulation setup. The objective is twofold: we assess the performance of our algorithms for plan repair and replanning under robot delays and failures, and we show the feasibility of our approach for real applications.  

\begin{table}[tbh!]
	\centering
  \caption{Plan repair and replanning performance. The relative increments in the metrics are averaged over the successful scenarios for each column.}
	\begin{adjustbox}{max width=\columnwidth}
		\begin{tabular}{c|c c|c c|c c}
		  	\hline
		  	\multicolumn{1}{c}{Approach} & \multicolumn{2}{c}{Repair} & \multicolumn{2}{c}{Replanning} & \multicolumn{2}{c}{Combined} \\
			\hline
			Delay length     & Short           & Long             & Short            & Long              & Short            & Long \\
			\hline \hline
			SR (\%)          & $89.5$          & $23$             & $100$            & $45.5$            & $100$            & $45.5$ \\
			$\Delta f$ (\%)  & $3.47 \pm 0.46$ & $30.81 \pm 5.02$ & $65.38 \pm 6.67$ & $83.68 \pm 8.79$  & $11.73 \pm 3.61$ & $53.15 \pm 5.70$ \\
			$\Delta Z$ (\%)  & $0.20 \pm 0.02$ & $1.77  \pm 0.30$ & $1.74  \pm 0.28$ & $4.28  \pm 0.60$  & $0.60  \pm 0.16$ & $3.65  \pm 0.52$ \\
			$\Delta$WTR (\%) & $2.11 \pm 0.39$ & $22.53 \pm 4.40$ & $48.15 \pm 5.10$ & $56.50 \pm 6.41$  & $7.44  \pm 2.07$ & $37.51 \pm 4.59$ \\
			$\Delta$CBT (\%) & $0.26 \pm 0.02$ & $0.31  \pm 0.15$ & $0.23  \pm 0.02$ & $0.27  \pm 0.11$  & $0.25  \pm 0.02$ & $0.26  \pm 0.10$ \\
			$\Delta$WD (\%)  & $0.02 \pm 0.01$ & $0.29  \pm 0.19$ & $5.78  \pm 0.72$ & $5.30  \pm 1.09$  & $0.86  \pm 0.36$ & $2.20  \pm 0.71$ \\
			\hline
		\end{tabular}
	\end{adjustbox}
  \label{tab:plan_repair}
\end{table}

In a first experiment, we assessed the effectiveness of our plan repair and replanning methods using random trials involving unexpected robot delays. We generated 200 random scenarios (as described in Section~\ref{sec:experimental_setup}), each containing 50 tasks and 10 \acp{UAV}. For each scenario, we computed an initial plan using Algorithm~\ref{alg:HeuristicPlanner} and then randomly selected a~\ac{UAV} from the plan and a task from its schedule to apply a delay. We generated 400 trials in total: the first 200 trials involved sampling short delays uniformly distributed between 30 seconds, 1 minute, and 2 minutes for the previously computed plans; the other 200 trials involved sampling long delays uniformly distributed between 10, 15, and 20 minutes. Long delays were only applied to recharge tasks, as such delays would otherwise render the UAV inoperative due to battery depletion. 

Table~\ref{tab:plan_repair} compares three approaches for handling delayed plans: repairing them with Algorithm~\ref{alg:repairPlans}, replanning from the current state with Algorithm~\ref{alg:HeuristicPlanner}, or our approach, replanning in case repairing fails.~\footnote{Video with illustrative examples at: \url{https://youtu.be/a6zXwTZKFR4}} The repair approach exhibits a high success rate, which decreases with longer delays, as expected. Replanning may fail when the delayed robots are depleted of battery and insufficient robots remain to complete the mission. 
For the repair approach, short delays minimally affect performance metrics due to the algorithm's ability to distribute the delay across existing waiting times. The impact is more pronounced for longer delays, primarily affecting the WTR (and consequently, $f$), as these delays exceed the initial waiting times and necessitate additional waiting periods. CBT remains unaffected because waiting times are strategically placed during recharges whenever possible. WD is also minimally impacted due to inter-robot synchronization; when one queue is delayed, others adjust proportionally, maintaining balanced WD. For replanning, results are similar for both short and long delays, as this approach does not explicitly accommodate existing delays. WTR (and consequently, $f$) is significantly worse after replanning. The heuristic planner disregards the initial plan and generates a new one from the delayed state, often leading to substantially different solutions. These results demonstrate that combining repair with replanning, rather than simply recomputing a new plan, leads to a higher SR and more effective handling of robot delays in terms of performance. Moreover, in real-world operations, repairing a plan offers practical benefits by avoiding task reassignment: once a multi-\ac{UAV} flight plan is established, operators typically prefer adjusting waiting times, as reassignment requires new flight plans and complicates safety checks.

\begin{figure}[tb]
  \centering
  \includegraphics[width=0.65\columnwidth]{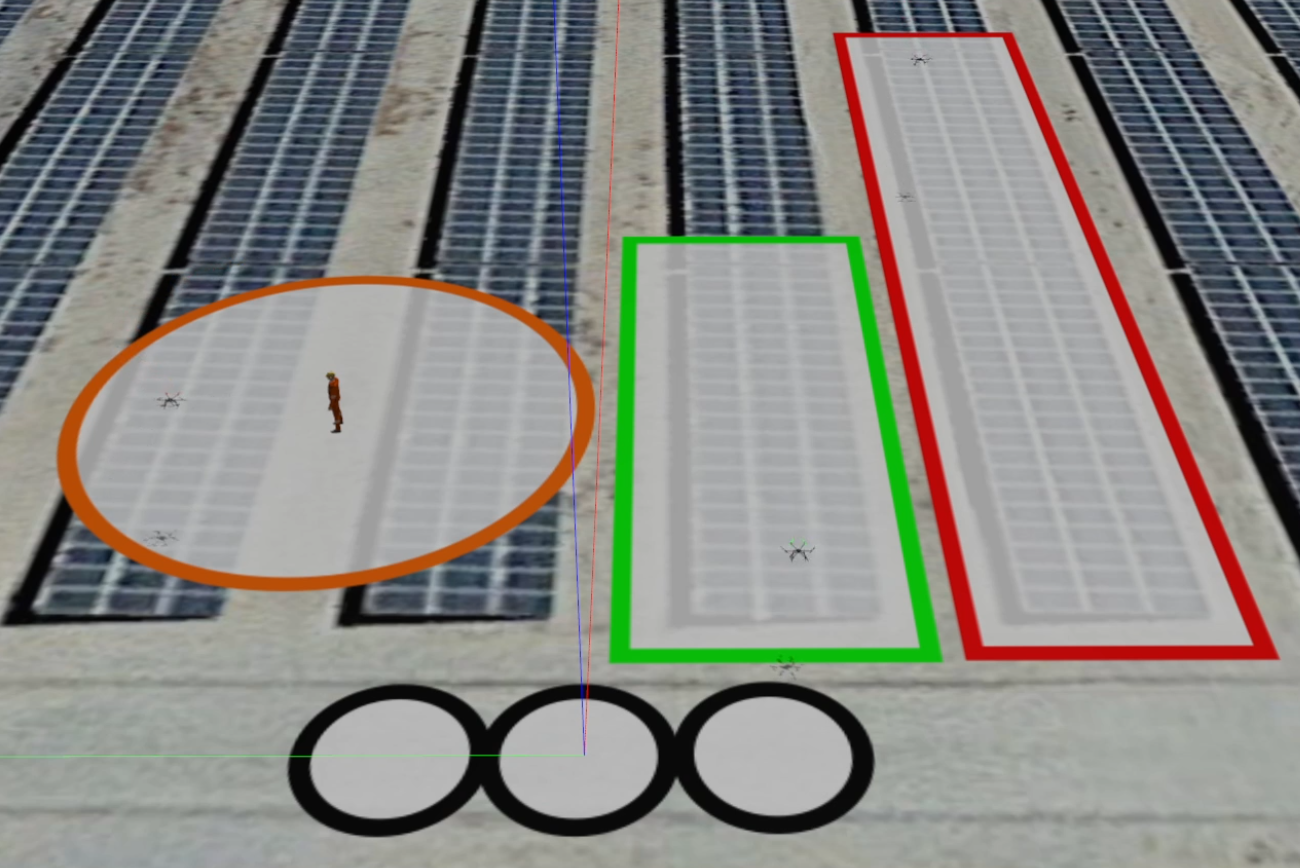}
  \caption{Simulation of the solar plant use case. Orange, green and red areas represent monitoring and inspection zones.}
  \label{fig:snapshot}
\end{figure}

Finally, we ran an experiment to demonstrate the replanning feature through an illustrative example. For that, we implemented our system architecture for mission replanning and execution (Section~\ref{sec:online_replaning}) integrated with ROS (Robot Operating System), and we created a Gazebo simulation of the Evora solar plant for our use case (Section~\ref{sec:use_case}).\footnote{Code at~\url{https://github.com/multirobot-use/mrta_execution_architecture}.} The~\acp{UAV} were modeled using a software-in-the-loop tool to simulate the PX4 autopilot firmware. This setup allowed us to reproduce real application scenarios quite closely, in which the system could not distinguish between simulated or actual \acp{UAV}. Our MATLAB code for mission planning was integrated into the ROS architecture using the ROS Toolbox for MATLAB.\footnote{https://www.mathworks.com/products/ros.html} Using ROS Actionlib, we created an \emph{Action Server} in MATLAB that receives planning requests from a High-Level Planner module in ROS and communicates the resulting plans to the \acp{UAV}. In the experiment, a team with 3 available \acp{UAV} was assigned a mission with 3 tasks: 2 \emph{inspection} tasks, and 1 \emph{monitoring} task. While executing the initial plan, computed by the heuristic planner, where each robot was assigned one task, we simulated a failure in the \ac{UAV} performing the \emph{monitoring} task, which was then reassigned to one of the other \acp{UAV}. Figure~\ref{fig:snapshot} shows a screenshot of the simulation.\footnote{Full video available at: \url{https://youtu.be/2hzP7LZRd0g}}

\section{Conclusions}
\label{sec:conclusions}

In this paper, we have presented a planning framework for heterogeneous \ac{MRTA} in long-endurance missions for dynamic scenarios. We have formulated an optimization problem as a \ac{MILP} that integrates robot recharges, heterogeneous robot capabilities, task fragmentation/relays, and time coordination for multi-robot tasks. Our results demonstrate that the aggregation of such a diverse set of features can help us solve complex missions that are relevant to a wide spectrum of multi-robot applications that would otherwise be unsolvable. To achieve better scalability for large scenarios and real-time planning performance, we have also proposed a heuristic solver for our \ac{MRTA} problem and integrated it into a mission planning and execution architecture capable of repairing or recomputing plans online as unexpected circumstances arise. Our experimental results in a realistic use case for multi-\ac{UAV} inspection demonstrate that 1) the flexibility introduced by our decomposable and varying coalition size tasks enables us to improve the defined performance metrics for the resulting plans; 2) our heuristic solver can scale for large scenarios in terms of computation time and outperform other similar variants in terms of efficacy; and 3) our replanning method can repair plans for unexpected robot delays.

It is important to highlight that the complexity of our~\ac{MRTA} problem makes it difficult to find an alternative planner for comparison in the state of the art. While we have discussed some related works in Section~\ref{sec:relatedWork}, we did not find scalable solvers in the literature that can tackle problems integrating all our constraints simultaneously. Metaheuristic methods that can solve generic optimization problems, such as genetic algorithms or simulated annealing, could be an option, but the standard implementations of these methods do not exploit inherent problem properties to achieve better performance and do not scale when the number of variables in the formulation increases significantly. For example, we have tried to solve the scenarios in Section~\ref{sec:scalability_exp} with the genetic algorithm implemented in the MATLAB Optimization Toolbox without success. An interesting direction for future work could be the design of a tailored metaheuristic algorithm that leverages particular features of the problem to improve efficiency in the search for possible solutions, and then use it for comparison. The exploration of more efficient \ac{MILP} formulations is another promising avenue for future work. Finally, we plan to conduct field experiments with a real team of UAVs performing inspection missions to demonstrate our planning framework, evaluating communication latency and overhead.

\balance




\bibliographystyle{IEEEtran}
\bibliography{IEEEabrv, bib_short}

\end{document}